\documentclass[hidelinks]{article}
\usepackage[letterpaper, portrait, top=1in, left=1in, right=1in, bottom=1in ]{geometry}


\usepackage{centernot}




\usepackage[utf8]{inputenc} 
\usepackage[T1]{fontenc}    
\usepackage{hyperref}       
\usepackage{url}            
\usepackage{booktabs}       
\usepackage{amsfonts,amsmath,amsthm}       
\usepackage{nicefrac}       
\usepackage{microtype}      
\usepackage{authblk}

\usepackage{color}
\usepackage{graphicx}
\usepackage{caption}
\usepackage{amsmath}
\usepackage{amsthm}
\usepackage{enumitem}
\usepackage{subfig}
\usepackage{appendix}

\newtheorem{definition}{Definition}

\newtheorem{lemma}{Lemma}
\newtheorem{theorem}{Theorem}
\newtheorem{corollary}{Corollary}

\usepackage[boxed]{algorithm}
\usepackage{graphicx}
\usepackage{capt-of}
\usepackage{algorithmicx}
\usepackage{algpseudocode}

\newcommand{\x}{\mathbf{x}}
\newcommand{\y}{\mathbf{y}}
\newcommand{\z}{\mathbf{z}}
\newcommand{\X}{\mathbf{X}}
\newcommand{\Y}{\mathbf{Y}}
\newcommand{\Z}{\mathbf{Z}}

\newcommand{\EE}{\mathbb{E}}
\newcommand{\ci}{\mathrel{\perp\mspace{-10mu}\perp}}

\title{Mimic and Classify : A \textit{meta}-algorithm for Conditional Independence Testing}

%

\author[3]{Rajat Sen\thanks{rajat.sen@utexas.edu}}
\author[2]{Karthikeyan Shanmugam\thanks{karthikeyan.shanmugam2@ibm.com}}
\author[1]{Himanshu Asnani\thanks{asnani@uw.edu}}
\author[1]{Arman Rahimzamani\thanks{armanrz@uw.edu}}
\author[1]{Sreeram Kannan\thanks{ksreeram@uw.edu}}
\affil[1]{Department of Electrical Engineering, University of Washington, Seattle, WA}
\affil[2]{IBM Research}
\affil[3]{Department of Electrical and Computer Engineering, University of Texas at Austin, TX}

\date{}

\begin{document}

\maketitle

\begin{abstract}
Given independent samples generated  from the joint distribution $p(\mathbf{x},\mathbf{y},\mathbf{z})$, we study the problem of Conditional Independence (CI-Testing), i.e., whether the joint equals the CI distribution $p^{CI}(\mathbf{x},\mathbf{y},\mathbf{z})= p(\mathbf{z}) p(\mathbf{y}|\mathbf{z})p(\mathbf{x}|\mathbf{z})$  or not. We cast this problem under the purview of the proposed, provable \textit{meta}-algorithm, \textbf{"Mimic and Classify"}, which is realized in two-steps: (a) \textit{Mimic} the CI distribution close enough to recover the support, and (b) \textit{Classify} to distinguish the joint and the CI distribution. Thus, as long as we have a good generative model and a good classifier, we potentially have a sound CI Tester. With this modular paradigm, CI Testing becomes amiable to be handled by state-of-the-art, both generative and classification methods from the modern advances in Deep Learning, which in general can handle issues related to \textit{curse of dimensionality} and operation in \textit{small sample regime}. We show intensive numerical experiments on synthetic and real datasets where new \textit{mimic} methods such conditional GANs, Regression with Neural Nets, outperform the current best CI Testing performance in the literature. Our theoretical results provide analysis on the estimation of null distribution as well as allow for general measures, i.e., when either some of the random variables are discrete and some are continuous or when one or more of them are discrete-continuous \textit{mixtures}.
\end{abstract}

\section{Introduction}

\label{sec:motivation}
Conditional Independence Testing is central in problems on causal discovery and statistical inference in several dynamical systems, such as gene regulatory networks, finance networks, edge testing in Bayesian Networks etc. (cf. \cite{pearl_book}, \cite{spirtes_book}, \cite{koller_book}, \cite{peters_book}). The problem of \textit{Conditional Independent Testing} (\textbf{CI-Testing} in short) is as follows: \textit{Given $n$ i.i.d. samples $(\x_i,\y_i,\z_i)_{i=1}^n$ from the joint probability distribution $p(\mathbf{x},\mathbf{y},\mathbf{z})$, distinguish between the two hypotheses :    $\mathcal{H}_0:  p(\mathbf{x},\mathbf{y},\mathbf{z}) = p^{CI}(\mathbf{x},\mathbf{y},\mathbf{z}) = p(\mathbf{z}) p(\mathbf{y}|\mathbf{z}) p(\mathbf{x}|\mathbf{z})$ and $ \mathcal{H}_1: p(\mathbf{x},\mathbf{y},\mathbf{z})= p(\mathbf{z}) p(\mathbf{y}|\mathbf{z})p(\mathbf{x}|\mathbf{y},\mathbf{z})$}. Our focus here is to design an overall methodology of non-parametric \textbf{CI-Testing} addressing all of them beyond the state of the art:

\textbf{(C1)}  \textbf{Estimating Null Distribution:} Benchmarking the performance of any CI test by providing exact or approximate analytical estimates on the null distribution.

\textbf{(C2)} \textbf{Small Sample Regime:} Designing the test with good performance in small sample regime; for e.g., density-estimation-based methods would routinely need a large number of samples.

\textbf{(C3)}  \textbf{Curse of Dimensionality:} Addressing the issue of large conditioning set; for e.g., to causally infer an edge over a large graph, practically the whole graph is the conditioning set. 

\textbf{(C4)} \textbf{General Measures:} Handling the case of \textit{mixture} of continuous and discrete components, that is beyond densities to general measures, in defining the \textbf{CI-Testing} problem above.

\subsection{Prior Art}
Much of the prior work focussed on handling \textbf{(C1)} by explicitly estimating conditional densities or functionals thereof, and conditional independence is observed via calculating an appropriate test statistics (\cite{su_white_2007}, \cite{su_white_2008}) or via discretizing the conditioning set (\cite{margaritis_2005}, \cite{huang_2010}). These approaches naturally fail on other concerns, \textbf{(C2)} and \textbf{(C3)}. Several non-parametric approaches based on \textit{kernel} methods and equivalent characterizations in terms of cross-covariance operator on the corresponding Reproducing  Kernel Hilbert Spaces (RKHSs)(\cite{fukumizu_2004}, \cite{gretton_2008}, \cite{fukumizu_2008}), in general fail to satisfy \textbf{(C2)} as they suffer from high computational complexity owing to the invertibility issues of large matrices as well as the non-robustness associated with the adjustment of the bandwidth parameters. 

Building on the idea of kernel trick and local permutation, \cite{kcit_2011} proposed Kernel Conditional Independence Test (KCIT) which harnesses partial association of regression functions (\cite{daudin_1980}). Approximate and faster versions of KCIT were proposed as Randomized Conditional Independence Test (RCIT) and variants (\cite{rcit_2017}). Conditional Distance Independence Test (CDIT) was proposed in \cite{cdit_2015} which uses conditional correlation of conditional characteristic functions. Alternatively by using the proven efficacy of \textit{classification methods} (\cite{boucheron_2005}, \cite{xgboost_2016}, \cite{imagenet_2012}),  Kernel Conditional Independence Permutation Test (KCIPT) (\cite{pcit_2014}) and Classifier Conditional Independence Test (CCIT)(\cite{ccit_2017}) concatenate local permutations and nearest-neighbor bootstrap, respectively, with a two-sample test to distinguish between the two hypotheses. However both the local permutation and nearest neighbor methods can potentially suffer from searching in the space $\Z$ when it is quite large besides needing more samples. Conditional Mutual Information Test (CMIT) (\cite{cmit_2018}) based on the estimators of Conditional Mutual Information (\cite{kozachenko1987sample}, \cite{singh2003nearest}, \cite{kraskov2004estimating}, \cite{gao2017mixture}, \cite{gao2016conditional}, \cite{pramod_demystify_2018}) lacks any sound theoretical framework \textbf{(C1)}. Note that none of the above methods, \textit{kernel} or otherwise, have been designed to work on general measures \textbf{(C4)}.

Faced with these limitations we consider that the more successful CI testers of the lot employ a two-step approach: (a) Try to get as close to the CI distribution, and then (b) Use the power of supervised learning by reducing the CI Testing problem to that of a binary classification. In this way they try to mitigate the concerns \textbf{(C1-C3)}. No doubt that classification methodology helps leverage already mature technology to handle the concern of theoretical guarantees on null distribution as well as performing well in the high-dimensional regime. However in the high dimensional regime, it will be too much to expect from nearest-neighbor and local permutation methods. \textit{Could there be any other technology as an alternative in Step (a)?} For instance can Step (a) incorporate Generative Adversarial Methods (GANs) (\cite{gan_2014}, \cite{cgan_2014})? However GANs also face the bottleneck of not being able to approximating the density closely, for instance with respect to multi-modal distributions (\cite{arora_gan_1_2018}, \cite{arora_gan_2_2018}). Nonetheless, motivated by our experiments with GANs and other deep learning methods such as Regression with Neural Nets in Section \ref{sec:sims}, which show improved performance, we were forced to ask the following bold question: \textbf{\textit{Is it sufficient to only approximate the CI distribution in some loose sense instead of approximating it closely?}} This investigation helped us develop the main idea behind this paper.

\subsection{Main Contributions}

We answer the above question in the affirmative, that is, we are good as long as we \textbf{mimic} the CI distribution reasonably closely (cf. the main theorem in the paper, Theorem \ref{thm:mainthm1} in Section \ref{sec:theory}). This is suggestive of a new modular paradigm and philosophy of CI Testing which we introduce in our work - \textbf{"Mimic and Classify"}, which is a essentially a two-step approach: 

Step 1 \textbf{Mimic:} Use any known "good" off-the-shelf generative methods to mimic the CI distribution.

Step 2 \textbf{Classify:} Perform a classification test distinguishing the joint and the mimicked distribution. 

Hence as long as we have a good generator in the sense of Theorem \ref{thm:mainthm1}, i.e. they recover the support of the CI distribution, and a classifier, we can potentially have a good CI Tester. This modular approach not only generalizes the existing methods,  but also provides a general paradigm for a wide class of methods, including those from the latest advances in Deep Learning (\cite{dl_book_2016}) to be used for not only classification step but also for generative (mimicking) step. We thus have a modular methodology which potentially addresses all the concerns, \textbf{(C1-C4)} noted above. We can therefore summarize the main contributions and the paper organization as follows:

\textbf{ A Modular Approach:} With mimic and classify approach, we obtain a general methodology for creating efficient CI Testers which can use methods from Deep Learning to mitigate the problem of \textit{curse of dimensionality} \textbf{(C3)}, such as in GANs, as well as circumvent the concern of \textit{small sample regime} \textbf{(C2)}, such as in Regression methods. Section \ref{sec:approach} presents the main Algorithm \ref{algm:CI-Test}, Sections \ref{sec:bayesoptimal} and \ref{sec:analysis} presents the theoretical analysis.

 \textbf{Discrete-Continuous Mixtures:}  The theoretical results have been shown to exist even when the samples are generated from a \textit{mixture} or general measure \textbf{(C4)}. This is dealt in Section \ref{sec:general-measures} where the general Theorem \ref{thm:mainthm2} is proved for general measures.
 
 \textbf{Generalization Bounds:} Section \ref{sec:pvalue} outlines the results of the generalization risk of our methodology, thus giving us theoretical estimates on the null distribution  \textbf{(C1)}.
 
\textbf{Empirical Evaluation:} Several candidates for mimicking the CI distribution such as Conditional GANs, Regression with Neural Nets are studies in Section \ref{sec:mimic} and are run  on both synthetic and real datasets in Section \ref{sec:sims}. Our results show they outperform the current state-of-the-art.

\section{Our Approach: Mimic and Classify}
\label{sec:approach}
We have the \textbf{CI-Testing} problem as defined in Section \ref{sec:motivation} with $p$ data samples $(\mathbf{x}_i,\mathbf{y}_i,\mathbf{z}_i) \in \mathbb{R}^{1 \times n_x} \times \mathbb{R} ^{1 \times n_y}\times \mathbb{R}^{1 \times n_z}$ ($1 \leq i \leq p$) drawn i.i.d from a joint distribution whose density function is given by $p(\mathbf{x},\mathbf{y},\mathbf{z})$ (we drop the subscripts in the notation of density for simplicity, hence $p(\x,\y,\z)$ stands for $p_{\X\Y\Z}(\x,\y,\z)$ and likewise for other marginal and conditional densities). Let us denote the data set by $D$. In other words, we will assume that the joint measure of the random variables $(\mathbf{X},\mathbf{Y},\mathbf{Z})$ is absolutely continuous with respect to the Lebesgue Measure on $\mathbb{R}^{1 \times n_x} \times \mathbb{R} ^{1 \times n_y}\times \mathbb{R}^{1 \times n_z}$, thus permitting a density function. Our results apply more generally even for general measures (cf. Section \ref{sec:general-measures}). However, for now, we will state all the results under this assumption to make the exposition clear. Formal algorithm description is in Algorithm \ref{algm:CI-Test}, and it is schematically depicted in Figure~\ref{fig:CI-Test_schematic}. We first describe the important steps in the meta-algorithm informally as follows:

a) \textbf{Mimic the CI Distribution:} We take the input dataset $D$ and divide it into two data sets $D_1$ and $D_2$. We create another data set $D'$ using points in $D_2$ that "mimics" the CI distribution by approximating it with $p(\mathbf{z}) q(\mathbf{y}|\mathbf{z}) p(\mathbf{x}|\mathbf{z})$ for some conditional density  $q(\cdot | \mathbf{z})$. We use the function $\mathrm{MIMIC} (\cdot)$ in Algorithm \ref{algm:CI-Test} to obtain $D'$. Here any particular $\mathrm{MIMIC} (\cdot)$ function is unspecified, except in the experiments, for it can encompass a broad spectrum of methods to attain its goal. Hence we derive theoretical conditions based on the density function $q(\mathbf{y}|\mathbf{z})$. 

b) \textbf{Label:} Label $D'$ and $D_1$ using distinct labels (say $0$ and $1$ respectively). Let $\tilde{D}=D' \bigcup D_1$.

c) \textbf{Two Binary Classifier Tests:} Drop the variable $x$ from $\tilde{D}$ and form $\tilde{D}^{-\mathbf{x}}$. Let  ${\cal C}$ be a class of binary classifiers. We train separately two binary classifiers, $f_1\in \cal C$ and $f_2\in \cal C$:

   $f_1:(\mathbf{y},\mathbf{z}) \rightarrow \{0,1\}$, i.e., $f_1$ is trained on $\tilde{D}^{-\mathbf{x}}$.
   
  $f_2:(\mathbf{x},\mathbf{y},\mathbf{z}) \rightarrow \{0,1\}$, i.e., $f_2$ is trained on $\tilde{D}$ using all its features.

d) \textbf{Check Separation in Errors:} If the classification errors from both these cases are well-separated, we reject the null hypothesis $\mathcal{H}_0$.

\textbf{Note:} With some abuse of notation, we use the same classifier class ${\cal C}$ for the two cases even when the feature spaces are different (one is $\mathbb{R}^{1 \times n_x} \times \mathbb{R} ^{1 \times n_y}\times \mathbb{R}^{1 \times n_z}$while the other is $\mathbb{R} ^{1 \times n_y}\times \mathbb{R}^{1 \times n_z}$). In this context, it is understood that loosely speaking classifiers for both the tests come from the same family (e.g.: Logistic Regression, Gradient boosted Trees, Neural Networks etc.) characterized by a specific training algorithm.

\begin{algorithm}[htbp]
\caption{CI-Testing by Mimicking CI distribution}
\label{algm:CI-Test}.
\begin{algorithmic}[1]
\Require{\textbf{Dataset} $D \sim p(\mathbf{x},\mathbf{y},\mathbf{z})$, \textbf{Training Algorithm} ${\cal A}(D_t,D_v,D_s)$ that trains a binary classifier given labelled train, validation and test datasets $D_t,D_v$ and $D_s$ and outputs a classifier from the class ${\cal C}$, \textbf{Threshold} $\tau$.}
\Ensure{Hypothesis: $\mathcal{H}_0$ or $\mathcal{H}_1$.}
\State Divide $D$ into two data sets $D_1$ and $D_2$.
\State $D'=\mathrm{MIMIC}(D_2)$. Let $D' \sim p(\mathbf{z})q(\mathbf{y}|\mathbf{z}) p(\mathbf{x}|\mathbf{z})$.
\State Let $D'$ be labeled with $0$ and let $D_1$ be labeled with $1$. Let $\tilde{D}=D' \bigcup D_1$. Split $\tilde{D}$ into train, validation and test datasets denoted by $\tilde{D}_t,\tilde{D}_v$ and $\tilde{D}_s$ respectively. 
\State Drop the variable $\mathbf{x}$ from $\tilde{D}$ and form the data set $\tilde{D}^{-\mathbf{x}}$. Let $\tilde{D}^{-\mathbf{x}}_t,\tilde{D}^{-\mathbf{x}}_v$ and $\tilde{D}^{-\mathbf{x}}_s$ denote the train, validation and test parts.
\State \label{f1_line} Train a classifier $f_1:(\mathbf{y},\mathbf{z}) \rightarrow \{0,1\} \in {\cal C}$ using the algorithm ${\cal A}(\tilde{D}^{-\mathbf{x}}_t,\tilde{D}^{-\mathbf{x}}_v,\tilde{D}^{-\mathbf{x}}_s)$. Let the classification error be $e(f_1,\tilde{D}^{-\mathbf{x}}_s)$.
\State \label{f2_line} Train another classifier $f_2:(\mathbf{x},\mathbf{y},\mathbf{z})\rightarrow \{0,1\} \in {\cal C}$ using the algorithm ${\cal A}(\tilde{D}_t,\tilde{D}_v,\tilde{D}_s)$. Let the classification error on the test set be $e(f_2,\tilde{D}_s)$.
\If {$\lvert e(f_2,\tilde{D}_s) - e(f_1,\tilde{D}^{-\mathbf{x}}_s) \rvert > \tau$}
    \State Return $\mathcal{H}_1$.
\Else
     \State Return $\mathcal{H}_0$.
\EndIf     

\end{algorithmic}
\end{algorithm}

\begin{figure}[t]
\centering
\includegraphics[width=.75\textwidth,trim={0cm 4.5cm 0cm 3.38cm},clip]{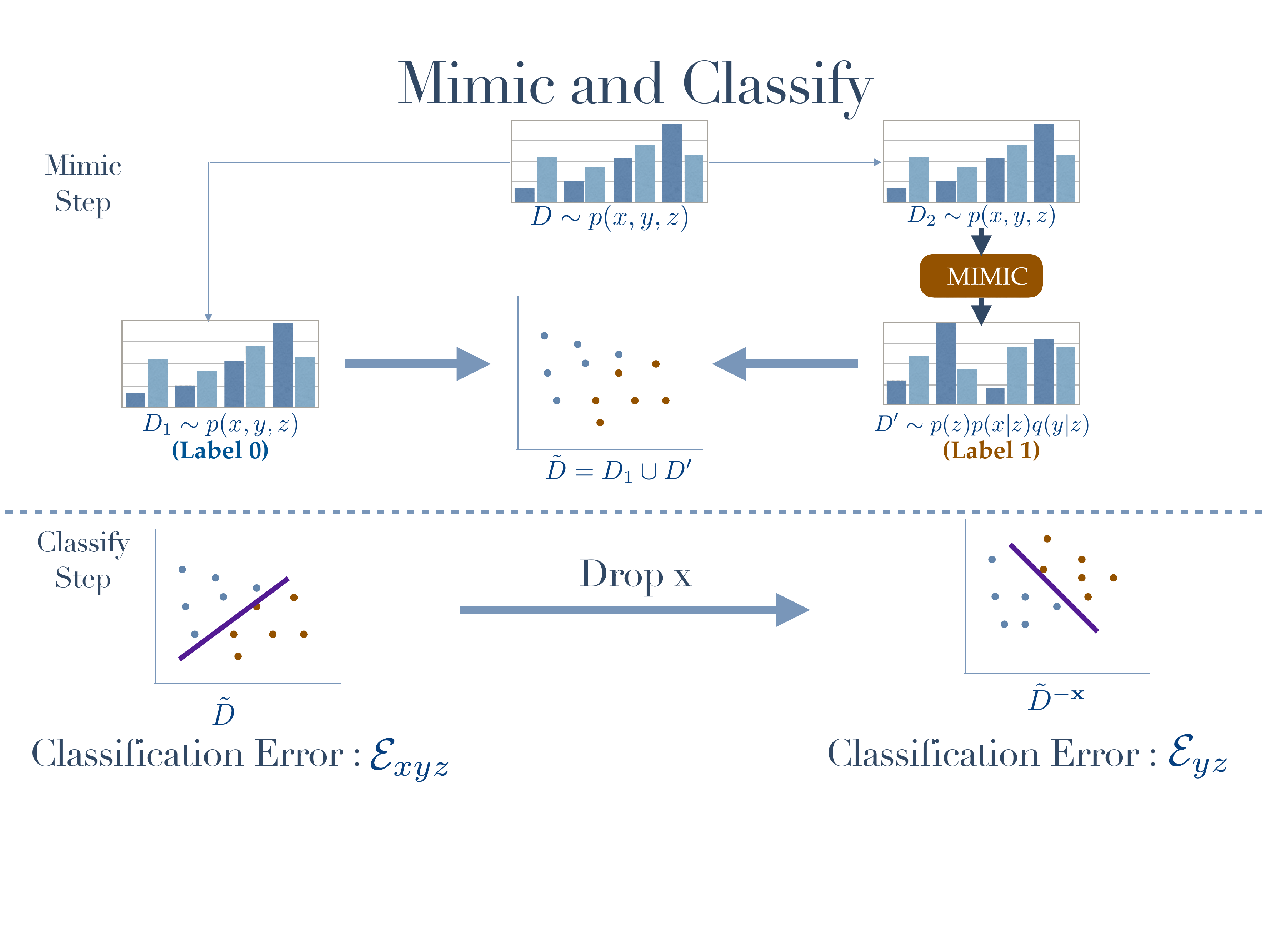}
\caption{A schematic of Algorithm~\ref{algm:CI-Test}. The upper section shows the \textbf{mimicking} procedure (Steps 1-4) and the lower section shows the \textbf{classification} procedure (Steps 5-11). The algorithm returns $\mathcal{H}_1$ if $| \mathcal{E}_{xyz} - \mathcal{E}_{yz} | < \tau$ and $\mathcal{H}_0$ otherwise.}
\label{fig:CI-Test_schematic}
\end{figure}

\section{Theoretical Results with the Bayes Optimal Classifiers}
\label{sec:theory}
 In this section, we analyze the meta algorithm in Algorithm \ref{algm:CI-Test} under the assumption that the algorithm ${\cal A}$ finds the Bayes optimal classifier. Let $f^{*}_1$ and $f^{*}_2$ be the Bayes optimal classifiers returned by ${\cal A}$ during the respective calls in Algorithm \ref{algm:CI-Test}. Then we show the following theorem:
 
 \begin{theorem}\label{thm:mainthm1}
   As long as the density function $q(\mathbf{y}|\mathbf{z})>0$ whenever $p(\mathbf{y},\mathbf{z})>0$, $\lvert \mathbb{E}_D[e(f^{*}_1,\tilde{D}_s)]- \mathbb{E}_D[e(f^{*}_2,\tilde{D}^{-\mathbf{x}}_s)]  \rvert =0$ if and only if hypothesis $\mathcal{H}_0$ is true.
   \end{theorem}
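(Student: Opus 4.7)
The plan is to reduce both expected Bayes errors to total-variation (TV) distances between fully-specified pairs of distributions and then read the theorem off as the equality case of the data-processing inequality. Because $f_1^\star$ and $f_2^\star$ are Bayes optimal and the two labels come from equal-sized halves of $\tilde{D}$, each expected test error equals the Bayes risk at uniform class prior, namely $\tfrac12\bigl(1-\mathrm{TV}(P_0,P_1)\bigr)$. For $f_2^\star$ the relevant pair is $P_1=p(\x,\y,\z)$ versus $P_0=p(\z)q(\y|\z)p(\x|\z)$, while for $f_1^\star$ it is the $(\y,\z)$-marginals of these two joints, namely $p(\y,\z)$ and $p(\z)q(\y|\z)$. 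The support hypothesis $q(\y|\z)>0$ on $\{p(\y,\z)>0\}$ is used only to guarantee that the relevant Radon-Nikodym derivatives are well defined. The theorem thus reduces to showing
\[
\mathrm{TV}\!\left(p(\x,\y,\z),\,p(\z)q(\y|\z)p(\x|\z)\right) \;=\; \mathrm{TV}\!\left(p(\y,\z),\,p(\z)q(\y|\z)\right) \iff \mathcal{H}_0.
\]

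For the forward direction, assume $\mathcal{H}_0$, so $p(\x,\y,\z)=p(\z)p(\y|\z)p(\x|\z)$. Then the integrand of the joint TV factors as $p(\z)p(\x|\z)\,|p(\y|\z)-q(\y|\z)|$; Fubini together with $\int p(\x|\z)\,d\x=1$ collapses this to the integrand of the marginal TV, so the two TVs and hence the two Bayes errors coincide.

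For the converse I would invoke the data-processing inequality for TV under the deterministic projection $\pi:(\x,\y,\z)\mapsto(\y,\z)$, which pushes $P_1,P_0$ forward to the marginal pair above and gives $\mathrm{TV}(\pi_\star P_0,\pi_\star P_1)\le \mathrm{TV}(P_0,P_1)$, with the theorem's hypothesised equality being the equality case. The standard sufficient condition for DPI equality is that the likelihood ratio
\[
L(\x,\y,\z) \;=\; \frac{p(\x,\y,\z)}{p(\z)q(\y|\z)p(\x|\z)} \;=\; \frac{p(\y|\z)\,p(\x|\y,\z)}{q(\y|\z)\,p(\x|\z)}
\]
be a function of $\pi$ alone; since only the factor $p(\x|\y,\z)/p(\x|\z)$ carries $\x$-dependence, independence from $\x$ forces that ratio to be constant in $\x$, and integrating against $p(\x|\z)\,d\x$ pins the constant at $1$, giving $p(\x|\y,\z)=p(\x|\z)$, which is precisely $\mathcal{H}_0$.

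The hard step will be making this equality case of DPI airtight: in full generality, TV equality after a push-forward is characterised only by the weaker condition $\mathrm{sgn}(p_1-p_0)\in\sigma(\pi)$, which is strictly weaker than full $\pi$-measurability of the likelihood ratio. Closing this gap is exactly where the product structure of $P_0$ (specifically, the shared $p(\x|\z)$ factor between $P_0$ and $P_1$ under $\mathcal{H}_0$) must be leveraged; alternatively, one can route the argument through a strictly convex $f$-divergence such as conditional KL whose DPI equality case does force the stronger measurability statement, and then transfer the conclusion back to TV. I expect this is where most of the formal write-up has to live.
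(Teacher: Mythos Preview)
Your forward direction ($\mathcal{H}_0$ implies equality of the two Bayes errors) is correct and is exactly the paper's Theorem~4. For the converse the paper does \emph{not} go through the data-processing inequality; it instead proves a quantitative coupling lower bound (Theorem~2),
\[
D_{\mathrm{TV}}\bigl(p(\x,\y,\z),\,p(\z)q(\y|\z)p(\x|\z)\bigr)-D_{\mathrm{TV}}\bigl(p(\y,\z),\,p(\z)q(\y|\z)\bigr)\;\ge\;\int\min\!\bigl(p(\y,\z),\,p(\z)q(\y|\z)\bigr)\,D_{\mathrm{TV}}\bigl(p(\x|\z),p(\x|\y,\z)\bigr)\,d(\y,\z),
\]
and then invokes the support hypothesis to force the right-hand side strictly positive under $\mathcal{H}_1$ (Theorem~3).

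The gap you yourself flagged is not a technicality you can patch: the converse direction of the theorem is \emph{false} as stated, so neither your route nor the paper's can be completed. Take $\z$ trivial, $X,Y\in\{0,1\}$ with $p(y{=}0)=p(y{=}1)=\tfrac12$, $p(x{=}0\mid y{=}0)=0.6$, $p(x{=}0\mid y{=}1)=0.4$ (so $X\not\perp Y$), and $q(y{=}0)=0.3$, $q(y{=}1)=0.7$, which satisfies the support condition. In the order $(x,y)\in\{(0,0),(1,0),(0,1),(1,1)\}$ one gets $p(\x,\y)=(0.3,0.2,0.2,0.3)$ and $q(\y)p(\x)=(0.15,0.15,0.35,0.35)$, so the joint TV and the marginal TV are both $0.2$; the two Bayes errors coincide while $\mathcal{H}_0$ fails. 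This is precisely the DPI-equality-without-sufficiency phenomenon you anticipated: for each $y$ the sign of $p(\x,\y)-q(\y)p(\x)$ is constant in $\x$ because $q(y)/p(y)$ is far enough from $1$ to dominate the bounded ratio $p(x\mid y)/p(x)$. Neither of your proposed fixes survives this example --- the product structure of $P_0$ is fully present, and routing through KL is irrelevant since the Bayes error is tied to TV, not KL, and TV equality does not transfer to KL equality. The same example breaks the paper's argument: step~(c) in the proof of Theorem~2 asserts that, under the coupling $\pi$, the conditional law of $(\x,\tilde{\x})$ given $\{\y=\tilde{\y},\z=\tilde{\z}\}$ is a coupling of $p(\x\mid\y,\z)$ and $p(\x\mid\z)$, but conditioning on that \emph{joint} event distorts the marginals (here one obtains $[0.5,0.5]$ rather than $p(x\mid y{=}0)=[0.6,0.4]$), so the bound by $\epsilon(\y,\z)$ is unjustified and the displayed inequality fails numerically ($0$ on the left, $0.08$ on the right). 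A continuous version follows by smoothing the atoms into narrow bumps.
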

 
 \textbf{Remark:} This means that it is enough for $q(\mathbf{y}|\mathbf{z})$ to match the support of $p(\mathbf{y}|\mathbf{z})$. It is not needed for $q(\cdot)$ to closely approximate $p(\cdot)$ in some distance. The exact and stronger characterization is given in Theorem \ref{thm:maintv}. This basically provides us an array of options for the $\mathrm{MIMIC}()$ function in Algorithm \ref{algm:CI-Test}. 
 
 
 \subsection{Bayes Optimal Classifiers and Total Variation Distance}
 \label{sec:bayesoptimal}
 We now review some theoretical preliminaries regarding the total variation distance and its relationship with Bayes optimal classifiers.  Let $p(\mathbf{w})$ and $q(\mathbf{w})$ be two different density functions on $\mathbf{R}^{1 \times s}$. Form a data set $D$ containing $p$ i.i.d. samples of the form $(\mathbf{w},\ell)$ where $\ell$ is a Bernoulli random variable with bias probability $0.5$. If $\ell=1$, then $\mathbf{w} \sim p(\cdot)$ and when $\ell=0$, then $\mathbf{w} \sim q(\cdot)$. Let us consider the space of classifiers $f:\mathbf{w} \rightarrow \{0,1\}$. Let $\mathbb{E}_{D}[e(f,D)]$ be the expected error for any classifier $f(\cdot)$. We can characterize the optimal classifier \cite{tong2000restricted} (proof is omitted as it is well-known) as follows:
 
 \begin{lemma}\label{lem:bayesopt}
 (folklore) The Bayes optimal classifier denoted by $f^{*}:\mathbf{w} \rightarrow \{0,1\}$ is: $\ell=1$ if $\mathbf{w}\in A$ and $\ell=0$ otherwise, where $A = \{\mathbf{w} : p(\mathbf{w}) > q (\mathbf{w})\}$. Further,
     $ \min \limits_{f} \mathbb{E}_D[e(f,D)] =  \mathbb{E}_D[e(f^{*},D)] = \frac{1}{2}\int \min (p(\mathbf{w}),q(\mathbf{w})) d\mathbf{w} $
 \end{lemma}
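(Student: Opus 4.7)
The plan is to prove the lemma by writing the expected error of an arbitrary classifier as an integral, and then minimizing the integrand pointwise. I expect no serious obstacle since this is a folklore decision-theoretic fact; the only thing requiring a bit of care is the handling of the tie set $\{p = q\}$ and making sure the label $\ell$ is marginalized with its prior $1/2$.

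First I would fix an arbitrary measurable classifier $f : \mathbf{w} \mapsto \{0,1\}$ and let $B = \{\mathbf{w} : f(\mathbf{w}) = 1\}$. Using the generative description of $D$ (draw $\ell$ uniform on $\{0,1\}$, then sample $\mathbf{w}$ from $p$ or $q$ accordingly), I would write
\begin{equation*}
\mathbb{E}_D[e(f,D)] \;=\; \Prob(f(\mathbf{w}) \neq \ell)
\;=\; \tfrac12\,\Prob(f(\mathbf{w})=0 \mid \ell=1) + \tfrac12\,\Prob(f(\mathbf{w})=1 \mid \ell=0),
\end{equation*}
which expands to
\begin{equation*}
\mathbb{E}_D[e(f,D)] \;=\; \tfrac12\!\int_{B^c} p(\mathbf{w})\,d\mathbf{w} \;+\; \tfrac12\!\int_B q(\mathbf{w})\,d\mathbf{w} \;=\; \tfrac12\!\int \bigl[\mathbf{1}_{B^c}(\mathbf{w})\,p(\mathbf{w}) + \mathbf{1}_B(\mathbf{w})\,q(\mathbf{w})\bigr]\,d\mathbf{w}.
\end{equation*}

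Next I would minimize the integrand pointwise. For each $\mathbf{w}$, the contribution is either $p(\mathbf{w})$ (if $\mathbf{w}\in B^c$) or $q(\mathbf{w})$ (if $\mathbf{w}\in B$), so the pointwise minimum equals $\min(p(\mathbf{w}),q(\mathbf{w}))$ and is attained by putting $\mathbf{w}$ into $B$ exactly when $q(\mathbf{w}) \le p(\mathbf{w})$. Choosing $B = A = \{\mathbf{w} : p(\mathbf{w}) > q(\mathbf{w})\}$ (ties broken arbitrarily, since on $\{p = q\}$ both choices contribute the same value) recovers the classifier $f^{*}$ in the statement, and yields the lower bound
\begin{equation*}
\mathbb{E}_D[e(f,D)] \;\ge\; \tfrac12\!\int \min\bigl(p(\mathbf{w}), q(\mathbf{w})\bigr)\,d\mathbf{w}
\;=\; \mathbb{E}_D[e(f^{*},D)]
\end{equation*}
with equality achieved by $f = f^{*}$. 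Since the lower bound holds for every $f$, $f^{*}$ is Bayes optimal and its error matches the claimed formula.

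Finally, I would note that this also connects to the total variation distance via the identity $\int \min(p,q)\,d\mathbf{w} = 1 - \mathrm{TV}(p,q)$, so the Bayes error equals $\tfrac12(1 - \mathrm{TV}(p,q))$; this observation is not needed for the lemma itself but motivates the section heading and the use of the result in the subsequent analysis of Theorem~\ref{thm:mainthm1}. The only subtlety worth an explicit sentence is measurability of $A$, which follows from the assumed measurability of $p$ and $q$ as density functions.
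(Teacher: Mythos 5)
Your proof is correct: the paper states this lemma as folklore and omits the proof entirely (citing it as well-known), and your pointwise-minimization argument is exactly the standard derivation that the paper implicitly relies on, including the correct handling of the tie set $\{p=q\}$ and the prior $1/2$ on $\ell$. Your closing remark connecting $\int \min(p,q)\,d\mathbf{w}$ to the total variation distance is precisely the content of the paper's Lemma~\ref{lem:err} and Corollary~\ref{corollary0}, so nothing is missing.
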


 We recall the following (equivalent) definitions of the total variation distance.
 
 \begin{definition}(Total Variation Distance)
 \label{def:tvar}
  Let $P$ and $Q$ be measures on $\mathbb{R}^{1 \times s}$ that are absolutely continuous with respect to the Lebesgue measure equipped with density functions $p(\mathbf{w}), ~\mathbf{w} \in \mathbb{R}^{1 \times s}$ and $q(\mathbf{w}), ~\mathbf{w} \in \mathbb{R}^{1 \times s}$ respectively. Let ${\cal F}$ denote the underlying sigma algebra. The total variation distance between measures $P$ and $Q$ denoted by $D_{\mathrm{TV}} (P,Q)$ is given by: 
   \begin{align}
      D_{\mathrm{TV}}(P,Q) & =  \sup_{A \in {\cal F}} \lvert P(A) - Q(A) \rvert  \label{tv:event}\\
      D_{\mathrm{TV}}(P,Q) & = \frac{1}{2} \int \lvert p(\mathbf{w}) -q(\mathbf{w}) \rvert d\mathbf{w} \label{tv:int} 
   \end{align}
 Total variation distance is also the optimal transportation cost over all coupling between measures $P$ and $Q$ when the cost function is $c(\mathbf{w},\mathbf{\tilde{w}}) = \mathbf{1}_{\{\mathbf{w} \neq \mathbf{\tilde{w}}\}}$, where $\mathbf{1}_{\{\cdot\}}$ is the indicator function. Let $\Pi(P,Q)$ be the set of all joint distributions defined on $(\mathbf{w},\mathbf{\tilde{w}}) \in \mathbb{R}^{1 \times s} \times \mathbb{R}^{1 \times s}$ such that the marginal distribution on $\mathbf{w}$ has the density function $p(\cdot)$ and the marginal distribution on $\mathbf{\tilde{w}}$ has the density function $q(\cdot)$. Then,
  \begin{align}\label{tv:cost}
    D_{\mathrm{TV}}(P,Q) =  \inf \limits_{\pi \in \Pi} \mathbb{E}_{\pi}[\mathbf{1}_{\{\mathbf{w}\neq \mathbf{\tilde{w}}\}}]
  \end{align}
   
 \end{definition}
 
Omitted proofs can be found in the \textbf{Supplementary Material}. 
  
\begin{lemma}\label{lem:err}
 (folklore) The classification error of the Bayes optimal classifier $E_{D} [e(f^{*},D)] = \frac{1}{2}-\frac{1}{2} D_{\mathrm{TV}}(P,Q) $.
 \end{lemma}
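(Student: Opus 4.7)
The plan is to combine the closed form for the Bayes optimal error given in Lemma \ref{lem:bayesopt} with the integral representation of the total variation distance in equation \eqref{tv:int}. Starting from
$$ \mathbb{E}_D[e(f^{*},D)] = \frac{1}{2} \int \min(p(\mathbf{w}), q(\mathbf{w})) \, d\mathbf{w}, $$
I would rewrite the integrand using the elementary pointwise identity $\min(a,b) = \frac{1}{2}\bigl(a + b - |a - b|\bigr)$, which holds for any real $a, b$ (simply case-split on whether $a \leq b$ or $a > b$).

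Applying this identity with $a = p(\mathbf{w})$ and $b = q(\mathbf{w})$ and integrating term by term yields
$$ \int \min(p(\mathbf{w}), q(\mathbf{w})) \, d\mathbf{w} = \frac{1}{2} \int p(\mathbf{w}) \, d\mathbf{w} + \frac{1}{2} \int q(\mathbf{w}) \, d\mathbf{w} - \frac{1}{2} \int |p(\mathbf{w}) - q(\mathbf{w})| \, d\mathbf{w}. $$
Since $p$ and $q$ are density functions each integrating to $1$, the first two terms contribute $\frac{1}{2} + \frac{1}{2} = 1$. The last term, by the integral definition \eqref{tv:int} of the total variation distance, equals $D_{\mathrm{TV}}(P,Q)$. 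Thus
$$ \int \min(p(\mathbf{w}), q(\mathbf{w})) \, d\mathbf{w} = 1 - D_{\mathrm{TV}}(P,Q). $$

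Substituting back into the expression from Lemma \ref{lem:bayesopt} gives
$$ \mathbb{E}_D[e(f^{*},D)] = \frac{1}{2}\bigl(1 - D_{\mathrm{TV}}(P,Q)\bigr) = \frac{1}{2} - \frac{1}{2} D_{\mathrm{TV}}(P,Q), $$
which is the claimed identity. There is essentially no obstacle here: the proof is a two-line algebraic reduction once one has both Lemma \ref{lem:bayesopt} and the integral form of $D_{\mathrm{TV}}$ in hand. The only mild care needed is to justify the $\min$ identity and to ensure that $p, q$ being valid densities allows the constant terms to combine to $1$; both are routine.
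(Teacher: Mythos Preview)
Your proof is correct and follows essentially the same route as the paper: invoke Lemma~\ref{lem:bayesopt}, apply the identity $\min(a,b)=\tfrac{1}{2}(a+b-|a-b|)$, and use the integral form \eqref{tv:int} of $D_{\mathrm{TV}}$.
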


As a result of Lemma \ref{lem:err} and Lemma \ref{lem:bayesopt}, we have the following corollary:
 \begin{corollary}\label{corollary0}
 $D_{\mathrm{TV}}(P,Q)=1-\int \min (p(\mathbf{u}),q(\mathbf{u})) d\mathbf{u}$. 
 \end{corollary}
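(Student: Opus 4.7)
The plan is straightforward: equate the two expressions for the expected Bayes-optimal error that are provided by the immediately preceding lemmas, and solve for $D_{\mathrm{TV}}(P,Q)$. Since both formulas describe the same quantity $\mathbb{E}_D[e(f^*,D)]$ in the setup of Lemma \ref{lem:bayesopt} (balanced Bernoulli label, class-conditional densities $p$ and $q$), they must be equal as real numbers.

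Concretely, I would first recall from Lemma \ref{lem:bayesopt} that
\[
\mathbb{E}_D[e(f^{*},D)] \;=\; \tfrac{1}{2}\int \min\bigl(p(\mathbf{u}),q(\mathbf{u})\bigr)\,d\mathbf{u},
\]
and from Lemma \ref{lem:err} that
\[
\mathbb{E}_D[e(f^{*},D)] \;=\; \tfrac{1}{2} - \tfrac{1}{2}\,D_{\mathrm{TV}}(P,Q).
\]
Setting these equal and multiplying through by $2$ gives
\[
\int \min\bigl(p(\mathbf{u}),q(\mathbf{u})\bigr)\,d\mathbf{u} \;=\; 1 - D_{\mathrm{TV}}(P,Q),
\]
and rearranging yields the claimed identity $D_{\mathrm{TV}}(P,Q) = 1 - \int \min\bigl(p(\mathbf{u}),q(\mathbf{u})\bigr)\,d\mathbf{u}$.

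There is really no obstacle here; the only thing worth flagging is that one should verify the two lemmas are being applied in the same setting (the mixture construction with a balanced Bernoulli label and densities $p,q$ with respect to Lebesgue measure), so that the two expressions genuinely refer to the same scalar quantity. Once that compatibility is noted, the corollary is a one-line algebraic consequence, which is exactly why the statement is presented as a corollary rather than a theorem. As a sanity check, the identity is also consistent with the pointwise decomposition $|p-q| = p + q - 2\min(p,q)$ combined with (\ref{tv:int}), which would give an alternative derivation if one wanted to avoid invoking Lemma \ref{lem:err} at all.
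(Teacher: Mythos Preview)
Your proposal is correct and matches the paper's own reasoning: the corollary is stated immediately after Lemma~\ref{lem:bayesopt} and Lemma~\ref{lem:err} precisely because it follows by equating their two expressions for $\mathbb{E}_D[e(f^*,D)]$, exactly as you do. Your side remark about the identity $|p-q|=p+q-2\min(p,q)$ together with (\ref{tv:int}) is in fact how the paper proves Lemma~\ref{lem:err} itself, so that alternative route is also consistent with the exposition.
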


 In Definition \ref{def:tvar}, restrict the set of couplings to $\Pi^*$ for which the actual probability space $(\Omega_\pi, {\cal F}_\pi, \pi)$ has the following property:  the sets $\{(\mathbf{u},\mathbf{u}): \mathbf{u} \in B \}$ are measurable with respect to ${\cal F}_\pi$. We have the following technical lemma which will be key in proving the main Theorem \ref{thm:mainthm1}:

\begin{lemma}\label{lem:restrict}
 $D_{\mathrm{TV}}(P,Q) =  \inf \limits_{\pi \in \Pi} \mathbb{E}_{\pi}[\mathbf{1}_{\{\mathbf{w}\neq \mathbf{\tilde{w}}\}}] = \inf \limits_{\pi \in \Pi^*} \mathbb{E}_{\pi}[\mathbf{1}_{\{\mathbf{w}\neq \mathbf{\tilde{w}}\}}]$
 \end{lemma}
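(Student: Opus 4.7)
The direction $\inf_{\pi \in \Pi^*}\mathbb{E}_\pi[\mathbf{1}_{\{\mathbf{w}\neq \mathbf{\tilde{w}}\}}]\geq \inf_{\pi \in \Pi}\mathbb{E}_\pi[\mathbf{1}_{\{\mathbf{w}\neq \mathbf{\tilde{w}}\}}]=D_{\mathrm{TV}}(P,Q)$ is immediate from the inclusion $\Pi^*\subseteq \Pi$ and equation~\eqref{tv:cost}. So the plan is to exhibit one concrete coupling $\pi^\star\in \Pi^*$ that attains the bound $\mathbb{E}_{\pi^\star}[\mathbf{1}_{\{\mathbf{w}\neq\mathbf{\tilde{w}}\}}]=D_{\mathrm{TV}}(P,Q)$, and to check that its underlying probability space has the required measurability of diagonal sets. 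Combined with Corollary~\ref{corollary0}, this gives the claim.

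The coupling I would use is the classical \emph{maximal coupling}. Let $m := \int \min(p(\mathbf{u}),q(\mathbf{u}))\,d\mathbf{u}$, which equals $1-D_{\mathrm{TV}}(P,Q)$ by Corollary~\ref{corollary0}. Assuming $0<m<1$ (the boundary cases $m=0$ and $m=1$ are handled separately and trivially), define the three densities
\begin{align*}
r(\mathbf{u}) &= \tfrac{1}{m}\min(p(\mathbf{u}),q(\mathbf{u})), \\
p_0(\mathbf{u}) &= \tfrac{1}{1-m}(p(\mathbf{u})-\min(p(\mathbf{u}),q(\mathbf{u}))), \\
q_0(\mathbf{u}) &= \tfrac{1}{1-m}(q(\mathbf{u})-\min(p(\mathbf{u}),q(\mathbf{u}))).
\end{align*}
Build the coupling as a mixture: with probability $m$, draw $\mathbf{U}\sim r$ and set $(\mathbf{w},\mathbf{\tilde{w}})=(\mathbf{U},\mathbf{U})$; with probability $1-m$, independently draw $\mathbf{w}\sim p_0$ and $\mathbf{\tilde{w}}\sim q_0$. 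A direct computation shows the marginals of $\mathbf{w}$ and $\mathbf{\tilde{w}}$ are $p$ and $q$, so $\pi^\star\in\Pi$.

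To realize $\pi^\star$ on a probability space satisfying the $\Pi^*$ requirement, take the concrete space $\Omega_\pi=\mathbb{R}^{1\times s}\times \mathbb{R}^{1\times s}$ with ${\cal F}_\pi$ the Borel $\sigma$-algebra and $(\mathbf{w},\mathbf{\tilde{w}})$ the identity map. For any Borel set $B\subseteq \mathbb{R}^{1\times s}$, the diagonal slice $\{(\mathbf{u},\mathbf{u}):\mathbf{u}\in B\}=(B\times B)\cap \Delta$, where $\Delta$ is the closed diagonal, is Borel in $\mathbb{R}^{1\times s}\times \mathbb{R}^{1\times s}$ and hence in ${\cal F}_\pi$, so $\pi^\star\in \Pi^*$. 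Finally, because $\pi^\star$ places its whole mass on the diagonal exactly with probability $m$ (and the off-diagonal component assigns zero mass to $\Delta$, as the product of the two residual densities is absolutely continuous with respect to $2s$-dimensional Lebesgue measure, which vanishes on $\Delta$), we get
\begin{equation*}
\mathbb{E}_{\pi^\star}[\mathbf{1}_{\{\mathbf{w}\neq\mathbf{\tilde{w}}\}}] = 1-m = D_{\mathrm{TV}}(P,Q).
\end{equation*}

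The only subtle point I expect is verifying that the off-diagonal component contributes no mass to $\Delta$; this is what forces the two residual draws to be \emph{independent} rather than merely having the correct marginals, since otherwise an arbitrary coupling of $p_0$ and $q_0$ could still place mass on $\Delta$ and inflate the bound. With independence and absolute continuity of the residual densities in hand, the diagonal has product-measure zero by Fubini, closing the argument.
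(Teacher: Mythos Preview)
Your proof is correct and follows essentially the same route as the paper: both establish the nontrivial direction by exhibiting the classical maximal coupling, with the paper citing the construction from Sason~(2013) while you spell it out explicitly. Your version is more self-contained and, unlike the paper, carefully verifies that the constructed coupling actually lies in $\Pi^*$ via the Borel-diagonal argument; note, however, that the paper's proof additionally records the density bound $g_\pi(\mathbf{u})\le \min(p(\mathbf{u}),q(\mathbf{u}))$ for arbitrary $\pi\in\Pi^*$, which is not part of the lemma statement but is reused downstream in the proof of Theorem~\ref{thm:maintv}.
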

   
 \subsection{Analysis of Algorithm \ref{algm:CI-Test}} 
 \label{sec:analysis}
 We consider the performance of the Bayes optimal classifiers for the two binary classification problems: a) Classifying the uniform mixture of $p(\mathbf{x},\mathbf{y},\mathbf{z})$ and $p(\mathbf{z}) q(\mathbf{y}|\mathbf{z}) p(\mathbf{x}|\mathbf{z})$ b) Classifying the uniform mixture of $p(\mathbf{y},\mathbf{z})$ and $q(\mathbf{y}|\mathbf{z}) p(\mathbf{z})$ as in Algorithm \ref{algm:CI-Test}. The basic result of this section is that when the conditionally independent distribution and the conditionally dependent distribution are different, the classification errors of these two classification problems with $q(\mathbf{y}|\mathbf{z})$ exhibit a non-trivial separation. The most interesting point to note is that this happens as long as there is an overlap of support between $q(\mathbf{y}|\mathbf{z})$ and $p(\mathbf{y}|\mathbf{z})$. This means that $q$ need not be close to $p$ in distance but only needs a much weaker condition of support overlap. Let us introduce some notation before we present the result. For every $(\mathbf{y},\mathbf{z})$ consider the conditional density functions $p(\mathbf{x}|\mathbf{z})$ and $p(\mathbf{x}|\mathbf{y},\mathbf{z})$. Let 
\begin{align}\label{eqn:coupling}
    \epsilon(\mathbf{y},\mathbf{z})= \max_{\pi \in \Pi(p(\mathbf{x}|\mathbf{z}),p(\mathbf{x}'|\mathbf{y},\mathbf{z}))} \mathbb{E}_{\pi}[\mathbf{1}_{\{\mathbf{x}=\mathbf{x}'\}}|\mathbf{y}, \mathbf{z}]
\end{align}

Formally, conditional dependence means that $\epsilon(\mathbf{y},\mathbf{z})<1$ with non zero probability with respect to the density function $p(\mathbf{y},\mathbf{z})$. Now, we state the result of this section formally, below:

\begin{theorem}\label{thm:maintv}
  \begin{align}
  & 2\lvert \mathbb{E}_D[e(f^{*}_1,D_s)]- \mathbb{E}_D[e(f^{*}_2,D^{-\mathbf{x}}_s)]  \rvert \nonumber\\
  &= D_{\mathrm{TV}}(p(\mathbf{z},\mathbf{x},\mathbf{y}),p(\mathbf{z}) q(\mathbf{y}|\mathbf{z}) p(\mathbf{x}|\mathbf{z})) - D_{\mathrm{TV}}(p(\mathbf{y},\mathbf{z}),p(\mathbf{z}) q(\mathbf{y}|\mathbf{z})) \nonumber\\
  \hfill & \geq \int \limits_{\mathbf{y},\mathbf{z}} \min (p(\mathbf{z}) q(\mathbf{y}|\mathbf{z}),p(\mathbf{z}) p(\mathbf{y}|\mathbf{z})) (1-\epsilon(\mathbf{y},\mathbf{z})) d(\mathbf{y},\mathbf{z}) 
  \end{align}
\end{theorem}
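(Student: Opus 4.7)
The plan is to turn each quantity into a $\min$-integral via Corollary \ref{corollary0} and then reduce the proof to a pointwise inequality that is in turn established by a three-stage coupling construction.

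For the first equality, I would apply Lemma \ref{lem:err} separately to $f_1^*$ and $f_2^*$. Since $f_2^*$ is the Bayes-optimal discriminator of the uniform mixture of $p(\mathbf{x},\mathbf{y},\mathbf{z})$ and $p(\mathbf{z})q(\mathbf{y}|\mathbf{z})p(\mathbf{x}|\mathbf{z})$, its expected error is $\tfrac12-\tfrac12 D_{\mathrm{TV}}(p(\mathbf{x},\mathbf{y},\mathbf{z}),p(\mathbf{z})q(\mathbf{y}|\mathbf{z})p(\mathbf{x}|\mathbf{z}))$, and analogously for $f_1^*$ with the $\mathbf{x}$-marginalized pair. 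Doubling the resulting gap immediately produces $|D_{\mathrm{TV}}(\text{joint})-D_{\mathrm{TV}}(\text{marginal})|$. To drop the absolute value I would use Corollary \ref{corollary0} together with the elementary fact that the integral of a pointwise minimum is dominated by the minimum of the integrals, i.e.\ $\int_{\mathbf{x}}\min(p(\mathbf{x},\mathbf{y},\mathbf{z}),p(\mathbf{z})q(\mathbf{y}|\mathbf{z})p(\mathbf{x}|\mathbf{z}))\,d\mathbf{x}\le\min(p(\mathbf{y},\mathbf{z}),p(\mathbf{z})q(\mathbf{y}|\mathbf{z}))$, which yields $D_{\mathrm{TV}}(\text{joint})\ge D_{\mathrm{TV}}(\text{marginal})$.

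For the lower bound, applying Corollary \ref{corollary0} to both TV terms and factoring $p(\mathbf{z})$ gives
\begin{align*}
&D_{\mathrm{TV}}(\text{joint})-D_{\mathrm{TV}}(\text{marginal}) \\
&\quad=\int p(\mathbf{z})\Bigl[\min(p(\mathbf{y}|\mathbf{z}),q(\mathbf{y}|\mathbf{z}))-\int\min\!\bigl(p(\mathbf{y}|\mathbf{z})p(\mathbf{x}|\mathbf{y},\mathbf{z}),q(\mathbf{y}|\mathbf{z})p(\mathbf{x}|\mathbf{z})\bigr)d\mathbf{x}\Bigr]d\mathbf{y}\,d\mathbf{z}.
\end{align*}
Fixing $(\mathbf{y},\mathbf{z})$, after rewriting $\epsilon(\mathbf{y},\mathbf{z})=\int\min(p(\mathbf{x}|\mathbf{z}),p(\mathbf{x}|\mathbf{y},\mathbf{z}))\,d\mathbf{x}$ (Corollary \ref{corollary0} applied to the pair inside the $\epsilon$ coupling), the claim is equivalent to the pointwise bound
$\int_{\mathbf{x}}\min(p(\mathbf{y}|\mathbf{z})p(\mathbf{x}|\mathbf{y},\mathbf{z}),q(\mathbf{y}|\mathbf{z})p(\mathbf{x}|\mathbf{z}))\,d\mathbf{x}\le \min(p(\mathbf{y}|\mathbf{z}),q(\mathbf{y}|\mathbf{z}))\,\epsilon(\mathbf{y},\mathbf{z})$. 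To produce it, I would build a three-stage coupling of the joint and mimic measures: share $\mathbf{z}$ under $p(\mathbf{z})$; given $\mathbf{z}$, draw $(\mathbf{y},\mathbf{y}')$ from the optimal coupling of $p(\mathbf{y}|\mathbf{z})$ and $q(\mathbf{y}|\mathbf{z})$, whose diagonal density is $\min(p(\mathbf{y}|\mathbf{z}),q(\mathbf{y}|\mathbf{z}))$; and conditional on $\mathbf{y}=\mathbf{y}'$ draw $(\mathbf{x},\mathbf{x}')$ from the $\epsilon$-maximizing coupling of $p(\mathbf{x}|\mathbf{y},\mathbf{z})$ and $p(\mathbf{x}|\mathbf{z})$. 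Lemma \ref{lem:restrict} is exactly what is needed to ensure the diagonal events $\{\mathbf{x}=\mathbf{x}'\}$ and $\{\mathbf{y}=\mathbf{y}'\}$ are measurable in the constructed product space, so that the matching probability under this coupling can be identified with the $\min$-integral on the other side.

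The hard part will be the pointwise inequality above: $\min(af,bg)$ does not factor as $\min(a,b)\min(f,g)$ when the scalars $a=p(\mathbf{y}|\mathbf{z})$, $b=q(\mathbf{y}|\mathbf{z})$ are very different in magnitude, so one cannot simply split the minimum. The subtlety is to realize the $\min(a,b)$ prefactor through the $\mathbf{y}$-matching stage of the coupling and the $\epsilon(\mathbf{y},\mathbf{z})$ factor through the subsequent $\mathbf{x}$-matching stage, while absorbing the off-diagonal $\{\mathbf{y}\neq\mathbf{y}'\}$ mass cleanly into $1-\min(a,b)$; Lemma \ref{lem:restrict} is precisely what allows that mass accounting. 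The support hypothesis $q(\mathbf{y}|\mathbf{z})>0$ wherever $p(\mathbf{y},\mathbf{z})>0$ is finally used to ensure $\min(p(\mathbf{y}|\mathbf{z}),q(\mathbf{y}|\mathbf{z}))>0$ on the support of $(\mathbf{y},\mathbf{z})$, so that $\epsilon(\mathbf{y},\mathbf{z})<1$ on a set of positive measure under $\mathcal{H}_1$ translates into a strictly positive gap, yielding the ``if and only if'' conclusion recorded in Theorem \ref{thm:mainthm1}.
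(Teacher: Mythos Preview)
Your handling of the first equality via Lemma~\ref{lem:err}, and of the sign via the data-processing observation $\int_{\mathbf{x}}\min(\cdot)\,d\mathbf{x}\le\min\!\bigl(\int_{\mathbf{x}}\cdot\bigr)$, are fine. The gap is in the lower bound.

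The pointwise inequality you reduce the claim to,
\[
\int_{\mathbf{x}}\min\!\bigl(a\,p(\mathbf{x}\,|\,\mathbf{y},\mathbf{z}),\,b\,p(\mathbf{x}\,|\,\mathbf{z})\bigr)\,d\mathbf{x}\ \le\ \min(a,b)\,\epsilon(\mathbf{y},\mathbf{z}),\qquad a=p(\mathbf{y}\,|\,\mathbf{z}),\ b=q(\mathbf{y}\,|\,\mathbf{z}),
\]
is false. With $\mathbf{z}$ trivial, $a=\tfrac12$, $b=1$, $p(\cdot\,|\,\mathbf{y},\mathbf{z})=\mathbf{1}_{[0,1]}$ and $p(\cdot\,|\,\mathbf{z})=\tfrac12\mathbf{1}_{[0,2]}$, the left side equals $\tfrac12$ while $\min(a,b)\,\epsilon=\tfrac12\cdot\tfrac12=\tfrac14$. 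In fact for $a\le b$ one has $\min(af,bg)\ge\min(af,ag)=a\min(f,g)$ pointwise in $\mathbf{x}$, so the inequality you want holds with the sign \emph{reversed}. This is exactly what your three-stage coupling certifies: exhibiting one coupling whose diagonal mass equals $\int p(\mathbf{z})\min(p,q)\,\epsilon$ shows only that the \emph{maximal} diagonal mass $1-D_{\mathrm{TV}}(\text{joint})$ is at least that much, hence it upper-bounds $D_{\mathrm{TV}}(\text{joint})$ --- the wrong direction. To lower-bound $D_{\mathrm{TV}}(\text{joint})$ one must upper-bound the diagonal mass of \emph{every} coupling, not construct a single one.

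That is the route the paper takes, and it is structurally different from your pointwise reduction. It keeps the coupling form $D_{\mathrm{TV}}(\text{joint})=1-\max_{\pi\in\Pi^*}\mathbb{E}_\pi[\mathbf{1}_{\mathbf{x}=\tilde{\mathbf{x}},\,\mathbf{y}=\tilde{\mathbf{y}},\,\mathbf{z}=\tilde{\mathbf{z}}}]$ (Lemma~\ref{lem:restrict}), conditions on the $(\mathbf{y},\mathbf{z})$-diagonal, and for an \emph{arbitrary} $\pi$ argues that the conditional law of $(\mathbf{x},\tilde{\mathbf{x}})$ given $\{\mathbf{y}=\tilde{\mathbf{y}},\,\mathbf{z}=\tilde{\mathbf{z}}\}$ is a coupling of $p(\mathbf{x}\,|\,\mathbf{y},\mathbf{z})$ with $p(\mathbf{x}\,|\,\mathbf{z})$, hence has matching probability at most $\epsilon(\mathbf{y},\mathbf{z})$; the $(\mathbf{y},\mathbf{z})$-diagonal density $g_\pi$ is separately bounded by $\min\!\bigl(p(\mathbf{y},\mathbf{z}),p(\mathbf{z})q(\mathbf{y}\,|\,\mathbf{z})\bigr)$. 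Multiplying these two upper bounds --- valid simultaneously for every $\pi$ --- caps the maximal diagonal mass and, after subtracting the Corollary~\ref{corollary0} expression for $D_{\mathrm{TV}}(\text{marginal})$, yields the asserted lower bound on the gap. Your plan never engages an arbitrary coupling, which is why it cannot close.
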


\textbf{Remark:} One can view Theorem \ref{thm:maintv} as a soft lower bound characterizing the difference when $q(\cdot)$ does not match $p(\cdot)$ perfectly.

\begin{theorem}\label{theorem3}
   As long as the density function $q(\textbf{y}|\mathbf{z})>0$ whenever $p(\mathbf{y},\mathbf{z})>0$, then conditional dependence implies that $2\lvert \mathbb{E}_D[e(f^{*}_1,D_s)]- \mathbb{E}_D[e(f^{*}_2,D^{-\mathbf{x}}_s)]  \rvert>0$.
\end{theorem}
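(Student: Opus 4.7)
The plan is to derive this corollary almost directly from Theorem \ref{thm:maintv}, whose lower bound already does the heavy lifting. First I would invoke Theorem \ref{thm:maintv} to obtain
\[
2\bigl\lvert \EE_D[e(f^{*}_1,D_s)] - \EE_D[e(f^{*}_2,D^{-\mathbf{x}}_s)] \bigr\rvert \;\geq\; \int_{\mathbf{y},\mathbf{z}} \min\bigl(p(\mathbf{z})q(\mathbf{y}|\mathbf{z}),\, p(\mathbf{z})p(\mathbf{y}|\mathbf{z})\bigr)\bigl(1-\epsilon(\mathbf{y},\mathbf{z})\bigr)\,d(\mathbf{y},\mathbf{z}).
\]
It therefore suffices to show that, under conditional dependence together with the support hypothesis on $q$, the right-hand integrand is strictly positive on a set of positive Lebesgue measure.

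Next I would unpack the hypothesis. By the formal definition of conditional dependence given just before Theorem \ref{thm:maintv}, the set $S = \{(\mathbf{y},\mathbf{z}) : \epsilon(\mathbf{y},\mathbf{z}) < 1\}$ has strictly positive probability under the density $p(\mathbf{y},\mathbf{z})$. Since $p(\mathbf{y},\mathbf{z})$ is a density with respect to Lebesgue measure on $\mathbb{R}^{1\times n_y}\times \mathbb{R}^{1\times n_z}$, the set $S \cap \{p(\mathbf{y},\mathbf{z}) > 0\}$ — call it $S^{+}$ — also has positive Lebesgue measure; otherwise its $p$-mass would be zero. On $S^{+}$ we have $1-\epsilon(\mathbf{y},\mathbf{z}) > 0$ by definition of $S$.

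It remains to verify that the ``$\min$'' factor is also strictly positive on $S^{+}$. On $S^{+}$, $p(\mathbf{y},\mathbf{z}) > 0$ forces $p(\mathbf{z}) > 0$ and $p(\mathbf{y}|\mathbf{z}) = p(\mathbf{y},\mathbf{z})/p(\mathbf{z}) > 0$. The support assumption $q(\mathbf{y}|\mathbf{z}) > 0$ whenever $p(\mathbf{y},\mathbf{z}) > 0$ then gives $q(\mathbf{y}|\mathbf{z}) > 0$ on $S^{+}$ as well. Consequently both arguments of the minimum are strictly positive on $S^{+}$, so $\min(p(\mathbf{z})q(\mathbf{y}|\mathbf{z}),\, p(\mathbf{z})p(\mathbf{y}|\mathbf{z})) > 0$ there.

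Combining these two facts, the integrand is strictly positive on $S^{+}$, which has positive Lebesgue measure, so the integral is strictly positive. The lower bound from Theorem \ref{thm:maintv} then yields the claimed strict inequality. The only mildly subtle step is the measure-theoretic bookkeeping in passing from ``positive $p$-probability'' to ``positive Lebesgue measure,'' but this is immediate from the absolute continuity of $p$; there is no real obstacle beyond carefully invoking the hypotheses.
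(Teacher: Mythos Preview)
Your proposal is correct and follows exactly the paper's approach: invoke the lower bound of Theorem~\ref{thm:maintv} and argue that the integrand is strictly positive on a set of positive measure using the definition of conditional dependence together with the support hypothesis on $q$. The paper's own proof is a one-line sketch of precisely this argument; you have simply filled in the measure-theoretic details that the paper leaves implicit.
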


\begin{theorem}\label{theorem4}
  Conditional independence implies that $2\lvert \mathbb{E}_D[e(f^{*}_1,D_s)]- \mathbb{E}_D[e(f^{*}_2,D^{-\mathbf{x}}_s)]  \rvert =0$
\end{theorem}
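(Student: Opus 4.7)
The plan is to exploit the equality established in Theorem \ref{thm:maintv}, which reduces the claim to a purely analytic statement about total variation. Specifically, Theorem \ref{thm:maintv} gives
\begin{equation*}
2\lvert \mathbb{E}_D[e(f^{*}_1,D_s)] - \mathbb{E}_D[e(f^{*}_2,D^{-\mathbf{x}}_s)]\rvert = D_{\mathrm{TV}}\bigl(p(\z,\x,\y),\, p(\z)q(\y|\z)p(\x|\z)\bigr) - D_{\mathrm{TV}}\bigl(p(\y,\z),\, p(\z)q(\y|\z)\bigr),
\end{equation*}
so it suffices to prove that these two TV distances coincide whenever $\mathcal{H}_0$ holds.

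Under $\mathcal{H}_0$, the joint density factorizes as $p(\x,\y,\z) = p(\z)p(\y|\z)p(\x|\z)$. Substituting this into the first TV distance and applying the integral form \eqref{tv:int} from Definition \ref{def:tvar}, I would write
\begin{equation*}
D_{\mathrm{TV}}\bigl(p(\z)p(\y|\z)p(\x|\z),\, p(\z)q(\y|\z)p(\x|\z)\bigr) = \frac{1}{2}\int p(\z)\,p(\x|\z)\,\lvert p(\y|\z) - q(\y|\z)\rvert\, d\x\, d\y\, d\z.
\end{equation*}
The key move is to observe that the factor $p(\x|\z)$ appears on both sides of the TV integrand and is independent of $\y$; pulling the $\x$-integral inside by Tonelli's theorem (the integrand is non-negative) and using $\int p(\x|\z)\,d\x = 1$ collapses the expression to $\frac{1}{2}\int p(\z)\lvert p(\y|\z) - q(\y|\z)\rvert\, d\y\, d\z$, which is precisely $D_{\mathrm{TV}}\bigl(p(\y,\z),\, p(\z)q(\y|\z)\bigr)$, since $p(\y,\z) = p(\z)p(\y|\z)$.

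Consequently the two TV terms on the right of the first display cancel and the difference is zero, as required. The argument is essentially a one-line consequence of Theorem \ref{thm:maintv} together with the observation that a common $\x$-conditional factors cleanly out of a TV computation; there is no serious obstacle. The only subtlety---trivial in the density setting considered here, but genuinely technical in the general-measure extension pursued later in Section \ref{sec:general-measures}---is justifying that such a common conditional factor can be marginalized out when one does not have Lebesgue densities, where the bare integral representation \eqref{tv:int} is no longer available and one must argue via the coupling characterization \eqref{tv:cost} instead.
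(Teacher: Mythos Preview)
Your proof is correct and follows essentially the same approach as the paper: both invoke the first equality of Theorem~\ref{thm:maintv} and then use the fact that a common conditional factor $p(\x|\z)$ leaves the TV distance unchanged, yielding Eq.~\eqref{eq:twotvs}. You simply supply the Tonelli justification for that fact, which the paper asserts without proof.
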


\textit{Proof of Theorem \ref{thm:mainthm1})} : Combining Theorem \ref{theorem3} and Theorem \ref{theorem4} implies Theorem \ref{thm:mainthm1}.\\

 Intuitively, for the points $(\mathbf{y},\mathbf{z})$ where $\mathbf{x}$ shows strong dependence on both $\mathbf{y}$ and $\mathbf{z}$, one needs $q(\mathbf{y}|\mathbf{z})$ to have large positive density.
As a corollary, we have the following variational characterization of total variation distance between the conditionally dependent and the conditionally independent distributions and another  corollary showing that the bounds for a simple "uniform" mimicking distribution.   
\begin{corollary}\label{corollary1}
$ \max \limits_{q(\cdot)}  \left[D_{\mathrm{TV}}(p(\mathbf{z},\mathbf{x},\mathbf{y}),p(\mathbf{z}) q(\mathbf{y}|\mathbf{z}) p(\mathbf{x}|\mathbf{z})) - D_{\mathrm{TV}}(p(\mathbf{z}) q(\mathbf{y}|\mathbf{z}), p(\mathbf{y},\mathbf{z}))  \right]
= D_{\mathrm{TV}}(p(\mathbf{z},\mathbf{x},\mathbf{y}),p(\mathbf{z}) p(\mathbf{y}|\mathbf{z}) p(\mathbf{x}|\mathbf{z})). 
$
\end{corollary}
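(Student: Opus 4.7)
The plan is to establish the equality by proving matching upper and lower bounds on the supremum. For the lower bound, I would choose the distinguished candidate $q(\mathbf{y}|\mathbf{z}) = p(\mathbf{y}|\mathbf{z})$ inside the $\max$. With this choice the second TV term collapses: $D_{\mathrm{TV}}(p(\mathbf{z})p(\mathbf{y}|\mathbf{z}), p(\mathbf{y},\mathbf{z})) = D_{\mathrm{TV}}(p(\mathbf{y},\mathbf{z}), p(\mathbf{y},\mathbf{z})) = 0$, while the first term is exactly $D_{\mathrm{TV}}(p(\mathbf{z},\mathbf{x},\mathbf{y}), p(\mathbf{z})p(\mathbf{y}|\mathbf{z})p(\mathbf{x}|\mathbf{z}))$, which matches the right-hand side. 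So the $\max$ is at least the RHS.

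For the upper bound, I would apply the triangle inequality for the total variation distance to the three measures $p(\mathbf{z},\mathbf{x},\mathbf{y})$, $p(\mathbf{z})p(\mathbf{y}|\mathbf{z})p(\mathbf{x}|\mathbf{z})$, and $p(\mathbf{z})q(\mathbf{y}|\mathbf{z})p(\mathbf{x}|\mathbf{z})$:
\begin{align*}
D_{\mathrm{TV}}\!\bigl(p(\mathbf{z},\mathbf{x},\mathbf{y}),\, p(\mathbf{z})q(\mathbf{y}|\mathbf{z})p(\mathbf{x}|\mathbf{z})\bigr)
&\le D_{\mathrm{TV}}\!\bigl(p(\mathbf{z},\mathbf{x},\mathbf{y}),\, p(\mathbf{z})p(\mathbf{y}|\mathbf{z})p(\mathbf{x}|\mathbf{z})\bigr) \\
&\quad + D_{\mathrm{TV}}\!\bigl(p(\mathbf{z})p(\mathbf{y}|\mathbf{z})p(\mathbf{x}|\mathbf{z}),\, p(\mathbf{z})q(\mathbf{y}|\mathbf{z})p(\mathbf{x}|\mathbf{z})\bigr).
\end{align*}
The last term simplifies: by the integral form \eqref{tv:int}, since $p(\mathbf{x}|\mathbf{z})$ is a density that integrates to $1$ in $\mathbf{x}$,
\[
D_{\mathrm{TV}}\!\bigl(p(\mathbf{z})p(\mathbf{y}|\mathbf{z})p(\mathbf{x}|\mathbf{z}),\, p(\mathbf{z})q(\mathbf{y}|\mathbf{z})p(\mathbf{x}|\mathbf{z})\bigr) = \tfrac{1}{2}\!\int p(\mathbf{z})\lvert p(\mathbf{y}|\mathbf{z})-q(\mathbf{y}|\mathbf{z}) \rvert\, d\mathbf{y}\, d\mathbf{z} = D_{\mathrm{TV}}\!\bigl(p(\mathbf{y},\mathbf{z}),\, p(\mathbf{z})q(\mathbf{y}|\mathbf{z})\bigr).
\]
Rearranging the triangle inequality then yields, for \emph{every} $q$,
\[
D_{\mathrm{TV}}\!\bigl(p(\mathbf{z},\mathbf{x},\mathbf{y}),\, p(\mathbf{z})q(\mathbf{y}|\mathbf{z})p(\mathbf{x}|\mathbf{z})\bigr) - D_{\mathrm{TV}}\!\bigl(p(\mathbf{z})q(\mathbf{y}|\mathbf{z}),\, p(\mathbf{y},\mathbf{z})\bigr) \le D_{\mathrm{TV}}\!\bigl(p(\mathbf{z},\mathbf{x},\mathbf{y}),\, p(\mathbf{z})p(\mathbf{y}|\mathbf{z})p(\mathbf{x}|\mathbf{z})\bigr),
\]
so the $\max$ is at most the RHS. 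Combining the two bounds closes the proof.

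There is no real obstacle here: the only subtlety is the small algebraic simplification showing that multiplying both arguments of $D_{\mathrm{TV}}$ by the same conditional density $p(\mathbf{x}|\mathbf{z})$ does not change the distance, which follows immediately from the definition \eqref{tv:int} and Fubini. Everything else is the textbook triangle inequality for $D_{\mathrm{TV}}$ (which follows from either \eqref{tv:event} or \eqref{tv:int}) plus the matching tight witness $q = p(\mathbf{y}|\mathbf{z})$.
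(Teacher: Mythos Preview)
Your proposal is correct and mirrors the paper's proof: the upper bound via the triangle inequality together with the observation that multiplying both arguments by the common conditional factor $p(\mathbf{x}|\mathbf{z})$ preserves $D_{\mathrm{TV}}$ (the paper records this as Eq.~\eqref{eq:twotvs}) is exactly what the paper does, and the lower bound via the witness $q=p(\mathbf{y}|\mathbf{z})$ is the same choice the paper makes. The only cosmetic difference is that the paper phrases the lower bound as ``the lower bound in Theorem~\ref{thm:maintv} matches the upper bound,'' whereas your argument is slightly more direct in simply noting that the second TV term vanishes at $q=p(\mathbf{y}|\mathbf{z})$.
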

  
\begin{corollary}\label{corollary2}
   Suppose $y$ is a scalar and is bounded in the interval $[-b,b]$ with probability $1$ and suppose that $\max \limits_{y,\mathbf{z}} (p(y|\mathbf{z})) \leq a$ for some $a>0$, then the uniform density $q(y|\mathbf{z}) = \frac{1}{2b},~ -b \leq y \leq b$ satisfies the following:   $2\lvert \mathbb{E}_D[e(f^{*}_1,D_s)]- \mathbb{E}_D[e(f^{*}_2,D^{-\mathbf{x}}_s)]  \rvert \geq \frac{1}{2ab} D_{\mathrm{TV}} (p(y,\mathbf{z})p(\mathbf{x}|\mathbf{z}),p(y,\mathbf{z}) p(\mathbf{x}|y,\mathbf{z}))$.
\end{corollary}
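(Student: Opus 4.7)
The plan is to apply Theorem \ref{thm:maintv} directly with the specific choice $q(y|\mathbf{z})=\tfrac{1}{2b}\mathbf{1}_{\{-b\le y\le b\}}$, and then massage the resulting integral into the stated total variation distance. Starting from Theorem \ref{thm:maintv}, the right-hand side becomes
\[
\int \min\!\left(\tfrac{p(\mathbf{z})}{2b},\,p(\mathbf{z})p(y|\mathbf{z})\right)(1-\epsilon(y,\mathbf{z}))\,d(y,\mathbf{z}) = \int p(\mathbf{z})\min\!\left(\tfrac{1}{2b},\,p(y|\mathbf{z})\right)(1-\epsilon(y,\mathbf{z}))\,d(y,\mathbf{z}).
\]
So I need to lower-bound the pointwise quantity $\min\!\left(\tfrac{1}{2b},\,p(y|\mathbf{z})\right)$ by a constant multiple of $p(y|\mathbf{z})$ and then recognize the remaining integral as the claimed total variation distance.

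The first step is the elementary algebraic bound $\min(1/(2b),p(y|\mathbf{z}))\ge \tfrac{1}{2ab}\,p(y|\mathbf{z})$. The case $p(y|\mathbf{z})\le 1/(2b)$ is immediate whenever $2ab\ge 1$, since then $p(y|\mathbf{z})\ge p(y|\mathbf{z})/(2ab)$. The case $p(y|\mathbf{z})>1/(2b)$ requires $\tfrac{1}{2b}\ge \tfrac{p(y|\mathbf{z})}{2ab}$, which holds iff $p(y|\mathbf{z})\le a$, given by assumption. The small but crucial observation is that the hypothesis $2ab\ge 1$ is automatic: $p(\cdot|\mathbf{z})$ integrates to one on $[-b,b]$ and is bounded by $a$, so $1=\int_{-b}^{b}p(y|\mathbf{z})\,dy\le 2ab$. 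This removes the boundary case and lets the single constant $1/(2ab)$ work uniformly.

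The second step is to identify $\int p(\mathbf{z})p(y|\mathbf{z})(1-\epsilon(y,\mathbf{z}))\,d(y,\mathbf{z})$ with $D_{\mathrm{TV}}(p(y,\mathbf{z})p(\mathbf{x}|\mathbf{z}),\,p(y,\mathbf{z})p(\mathbf{x}|y,\mathbf{z}))$. From the coupling characterization (\ref{tv:cost}) and the definition of $\epsilon(y,\mathbf{z})$, we have $1-\epsilon(y,\mathbf{z})=D_{\mathrm{TV}}(p(\mathbf{x}|\mathbf{z}),p(\mathbf{x}|y,\mathbf{z}))$. Expanding this inner TV distance via (\ref{tv:int}) and applying Fubini gives
\[
\int p(y,\mathbf{z})\,D_{\mathrm{TV}}(p(\mathbf{x}|\mathbf{z}),p(\mathbf{x}|y,\mathbf{z}))\,d(y,\mathbf{z}) = \tfrac{1}{2}\int \lvert p(y,\mathbf{z})p(\mathbf{x}|\mathbf{z}) - p(y,\mathbf{z})p(\mathbf{x}|y,\mathbf{z})\rvert\,d(\mathbf{x},y,\mathbf{z}),
\]
which is exactly the desired total variation distance. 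Chaining the two steps with Theorem \ref{thm:maintv} yields the corollary.

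The only subtle point, and the place where the proof could go wrong, is the implicit use of $2ab\ge 1$: without it the constant $1/(2ab)$ in the statement would exceed $1$ and the pointwise inequality for $\min$ would fail. Once that consistency check is in place, the remainder is a routine rewriting via Fubini and the definitions of $D_{\mathrm{TV}}$ and $\epsilon$, so I expect no further obstacles.
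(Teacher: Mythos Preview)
Your proposal is correct and follows essentially the same route as the paper: apply Theorem~\ref{thm:maintv}, use the pointwise bound $\min(q,p)\ge p/(2ab)$ (the paper phrases it as $q\ge p/(2ab)$), and then identify $1-\epsilon(y,\mathbf{z})$ with the conditional total variation distance and collapse the integral via Fubini. Your explicit verification that $2ab\ge 1$ (from $\int_{-b}^b p(y|\mathbf{z})\,dy=1$ and $p\le a$) is a detail the paper leaves implicit, but otherwise the arguments coincide.
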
  

  \subsection{General Measures}
   \label{sec:general-measures}
   So far in our treatment of results, we have assumed that density $p(\mathbf{x}, \mathbf{y}, \mathbf{z})$ exists for the joint distribution of $(\mathbf{X}, \mathbf{Y}, \mathbf{Z})$. Now, let the original probability measure from which the data is generated is given by a general measure $\mathbb{P}$ and a measure $\mathbb{Q}$ is the conditionally independent (due to the markov chain $\mathbf{X}-\mathbf{Z}-\mathbf{Y}$) measure induced on the data set as a result of our algorithm. Note that these two measures differ in the induced conditional measure on $\mathbf{Y}$ given $\mathbf{Z}$ and let $\mathbb{P}_{YZ}$ and $\mathbb{Q}_{YZ}$ be the restrictions of measures on $(\mathbf{Y}, \mathbf{Z})$ respectively. The following theorem generalizes for arbitrary measures:
   
    \begin{theorem}
 \label{thm:mainthm2}
   As long as the the Radon Nikodym derivate $\frac{d\mathbb{Q}_{YZ}}{d\mathbb{P}_{YZ}}$ exists and is $\neq0$ everywhere except set of probability zero with respect to the measure $\mathbb{P}_{YZ}$, $\lvert \mathbb{E}_D[e(f^{*}_1,\tilde{D}_s)]- \mathbb{E}_D[e(f^{*}_2,\tilde{D}^{-\mathbf{x}}_s)]  \rvert =0$ if and only if hypothesis $\mathcal{H}_0$ is true.
   \end{theorem}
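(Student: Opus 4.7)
The plan is to adapt the three-step density-based proof of Theorem \ref{thm:mainthm1} (Theorem \ref{thm:maintv} chained with Theorems \ref{theorem3} and \ref{theorem4}) to the measure-theoretic setting, replacing every density with a Radon--Nikodym derivative and every conditional density with a regular conditional probability. Assuming the underlying spaces are standard Borel, regular conditional probabilities $\mathbb{P}_{X|Y,Z}$, $\mathbb{P}_{X|Z}$, $\mathbb{P}_{Y|Z}$, $\mathbb{Q}_{Y|Z}$ exist and play the exact role that conditional densities did. Moreover, the auxiliary results from Section \ref{sec:bayesoptimal} are already measure-theoretic at heart: the Bayes optimal classifier of the half-half mixture $\tfrac12(\mathbb{P}+\mathbb{Q})$ thresholds $\tfrac{d\mathbb{P}}{d(\mathbb{P}+\mathbb{Q})}$ at $1/2$; its expected error equals $\tfrac12 - \tfrac12 D_{\mathrm{TV}}(\mathbb{P},\mathbb{Q})$ (Lemma \ref{lem:err}); and both the event-supremum form \eqref{tv:event} and the coupling form \eqref{tv:cost}, along with the diagonal-measurable restriction in Lemma \ref{lem:restrict}, make sense for arbitrary $\mathbb{P},\mathbb{Q}$. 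So all the machinery needed by the density proof survives the translation.

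Next, I would redo the master inequality of Theorem \ref{thm:maintv} in measure form. Write $h := \tfrac{d\mathbb{Q}_{YZ}}{d\mathbb{P}_{YZ}}$ and define the ``min-measure'' $\mathbb{M}_{YZ} := \min(1,h)\cdot \mathbb{P}_{YZ}$, which is the general-measure analogue of $\min(p,q)\, d(\mathbf{y},\mathbf{z})$ from Corollary \ref{corollary0}. Define $\epsilon(\mathbf{y},\mathbf{z})$ as the supremum over couplings of the regular conditional laws $\mathbb{P}_{X|Z=\mathbf{z}}$ and $\mathbb{P}_{X|Y=\mathbf{y},Z=\mathbf{z}}$ of $\pi(X=X')$, exactly mirroring \eqref{eqn:coupling}. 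The central construction lifts any $(Y,Z)$-level coupling attaining $D_{\mathrm{TV}}(\mathbb{P}_{YZ},\mathbb{Q}_{YZ})$ via Lemma \ref{lem:restrict} to an $(X,Y,Z)$-coupling by attaching, on the diagonal $\{Y=Y',Z=Z'\}$, the optimal conditional coupling witnessing $\epsilon(Y,Z)$ and, off the diagonal, an independent product of the two conditionals. Chasing the inequalities used in Theorem \ref{thm:maintv} then yields
\begin{equation*}
2\bigl\lvert \mathbb{E}_D[e(f^{*}_1,\tilde{D}_s)] - \mathbb{E}_D[e(f^{*}_2,\tilde{D}^{-\mathbf{x}}_s)] \bigr\rvert \;\geq\; \int (1-\epsilon(\mathbf{y},\mathbf{z}))\, d\mathbb{M}_{YZ}(\mathbf{y},\mathbf{z}).
\end{equation*}

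From here the biconditional falls out cleanly. For the $\mathcal{H}_0$ direction, conditional independence gives $\mathbb{P}_{X|Y,Z}=\mathbb{P}_{X|Z}$ $\mathbb{P}$-a.s., so the identity coupling $X\equiv X'$ is admissible and forces $D_{\mathrm{TV}}(\mathbb{P}_{XYZ},\mathbb{P}_Z\otimes\mathbb{Q}_{Y|Z}\otimes\mathbb{P}_{X|Z})\le D_{\mathrm{TV}}(\mathbb{P}_{YZ},\mathbb{P}_Z\otimes\mathbb{Q}_{Y|Z})$; the reverse inequality follows from the standard data-processing contraction of $D_{\mathrm{TV}}$ under marginalization, so the two total-variation distances (and hence the two Bayes-error expectations) coincide. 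For the $\mathcal{H}_1$ direction, the Radon--Nikodym hypothesis that $h>0$ except on a $\mathbb{P}_{YZ}$-null set makes $\min(1,h)>0$ $\mathbb{P}_{YZ}$-almost-surely, so $\mathbb{M}_{YZ}$ is equivalent to $\mathbb{P}_{YZ}$ on the latter's support; conditional dependence is exactly the statement that $\epsilon(\mathbf{y},\mathbf{z})<1$ on a set of positive $\mathbb{P}_{YZ}$-measure, hence the displayed lower bound is strictly positive. The main technical hurdle I anticipate is the measurable lifting of an optimal $(Y,Z)$-coupling through the regular conditional kernels while keeping the map $(\mathbf{y},\mathbf{z})\mapsto \pi^{*}_{\mathbf{y},\mathbf{z}}$ measurable; this is the general-measure analogue of Lemma \ref{lem:restrict} and is where the standard-Borel assumption is used implicitly.
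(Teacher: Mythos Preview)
Your proposal is correct and follows essentially the same route as the paper: generalize the Bayes-error/TV identity to Radon--Nikodym derivatives with respect to a dominating measure, reprove the master lower bound of Theorem~\ref{thm:maintv} in the form $\int \min\!\bigl(\tfrac{d\mathbb{Q}_{YZ}}{d\mathbb{P}_{YZ}},1\bigr)(1-\epsilon(\mathbf{y},\mathbf{z}))\,d\mathbb{P}_{YZ}$, and then mimic Theorems~\ref{theorem3} and~\ref{theorem4} for the two directions of the biconditional. If anything, you are more explicit than the paper about the regular-conditional-probability machinery and the measurable lifting of couplings (the paper simply asserts that the argument is ``similar algebra'' and omits these details), so your write-up would in fact be a welcome expansion of the appendix proof.
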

   
 \subsection{Finite Samples and p-values}
 \label{sec:pvalue}
The validation set $\tilde{D}_s$ in Algorithm \ref{algm:CI-Test} of size $n$ consists of a labelled uniform mixture of i.i.d samples drawn from the two distributions, $p(\mathbf{z})q(\mathbf{y}|\mathbf{z}) p (\mathbf{x}|\mathbf{z}) $ and $p(\mathbf{z})q(\mathbf{y}|\mathbf{z}) p (\mathbf{x}|\mathbf{y},\mathbf{z}) $.  Let $f_1$ and $f_2$ be two classifiers from the two classification problems from lines \ref{f1_line} and \ref{f2_line} respectively in Algorithm  \ref{algm:CI-Test} . Let $L(f,\mathbf{u},\ell)$ be the zero-one loss function that is $1$ if the prediction $f(\mathbf{u}) \neq \ell$ and $0$ otherwise. Let $\{\mathbf{u}_i, \ell_i \}_{i=1}^n$ denote the labelled examples in $\tilde{D}_s$. Since classifier $f_1$ only operates on other coordinates except the one that contains $\mathbf{x}$, we will assume that it ignores the coordinates corresponding to $\mathbf{x}$ if supplied with those coordinates. Now, we have:
$ e(f_1,\tilde{D}_s^{-\mathbf{x}})-e(f_2,\tilde{D}_s) = \frac{1}{n}\sum \limits_{(\mathbf{u}_i,\ell_i) \in \tilde{D}_s} L(f_1,\mathbf{u}_i,\ell_i)- L(f_2,\mathbf{u}_i,\ell_i).$
Since $(\mathbf{u}_i,\ell_i)$ is i.i.d and $0\leq L(\cdot) \leq 1$ is bounded, $-1 \leq L(f_1,\cdot)-L(f_2,\cdot) \leq 1$. Therefore, applying Chernoff bounds for i.i.d bounded random variables, we have the following subgaussian tail concentration on the test statistic:
 \begin{align}
  P( \lvert e(f_1,\tilde{D}_s^{-\mathbf{x}})-e(f_2,\tilde{D}_s) - \mathbb{E}_D[e(f_1,\tilde{D}_s^{-\mathbf{x}}) -e(f_2,\tilde{D}_s]  \rvert < \epsilon ) \leq 2 \exp(- n\epsilon^2/2),~ \forall \epsilon>0
 \end{align}
 
Supposing, $f_1$ and $f_2$ are Bayes optimal classifiers (i.e. $f^{*}_1$ and $f^{*}_2$ respectively), then $E_D[e(f^{*}_1,\tilde{D}_s^{-\mathbf{x}}) -e(f^{*}_2,\tilde{D}_s)] =0$ if and only if conditional independence holds according to Theorem \ref{thm:mainthm1}. Hence, under the Bayes optimal classification, the tail of the null (and also the non-null) distribution is a subgaussian which can be used to generate a \textbf{p-value.} Further, suppose the Bayes optimal classifiers $f^{*}_1$ and $f^{*}_2$ lie in VC classes of VC dimension at most  $d$. Let $f^{\mathrm{ERM}}_1$ and $f^{\mathrm{ERM}}_2$ be the empirical risk minimizers on the training sets $\tilde{D}_t^{-\mathbf{x}}$ and $\tilde{D}_t$ respectively in Algorithm \ref{algm:CI-Test} over the set of classifiers in their respective VC classes. Then, by the standard results in learning theory  \cite{boucheron_2005}:
\begin{theorem}
\label{thm:pvalue}
 $\forall \delta \in (0,1)$, $\mathbb{E}_D[e(f^{\mathrm{ERM}}_i,\tilde{D}_s)] \leq \mathbb{E}_D[e(f^{*}_i,\tilde{D}_s)] + C \sqrt{\frac{d}{n_t}} + \sqrt{ \frac{2\log(1/\delta)}{n_t}} $ with probability atleast $1- \delta$ where $n_t=|\tilde{D}_t|$.
\end{theorem}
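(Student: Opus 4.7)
The plan is to reduce the claim to the standard ERM excess-risk decomposition and then invoke two off-the-shelf tools from learning theory: the Vapnik--Chervonenkis bound on the expected uniform deviation for a VC class under the zero-one loss, and McDiarmid's bounded-differences inequality to concentrate that supremum. Write $R(f) := \mathbb{E}_D[e(f,\tilde{D}_s)]$, which for any fixed $f$ equals the population risk $\mathbb{E}[L(f,\mathbf{u},\ell)]$ since the test samples are i.i.d. from the labelled mixture, and $\hat R_t(f) := e(f,\tilde{D}_t)$ for the empirical risk on the training set of size $n_t$. The probability in the conclusion is over the draw of $\tilde{D}_t$, which is what makes $f^{\mathrm{ERM}}_i$ random.

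First, I would use the standard three-term decomposition
\[
R(f^{\mathrm{ERM}}_i) - R(f^{*}_i) = \bigl[R(f^{\mathrm{ERM}}_i) - \hat R_t(f^{\mathrm{ERM}}_i)\bigr] + \bigl[\hat R_t(f^{\mathrm{ERM}}_i) - \hat R_t(f^{*}_i)\bigr] + \bigl[\hat R_t(f^{*}_i) - R(f^{*}_i)\bigr].
\]
The middle bracket is non-positive because $f^{\mathrm{ERM}}_i$ minimizes $\hat R_t$ over the VC class, and each outer bracket is bounded in absolute value by the uniform deviation $\Delta := \sup_{f \in {\cal C}_i} |R(f) - \hat R_t(f)|$. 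Hence $R(f^{\mathrm{ERM}}_i) - R(f^{*}_i) \leq 2\Delta$, and the whole problem reduces to a high-probability upper bound on $\Delta$.

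Second, I would bound $\mathbb{E}[\Delta]$ via VC theory: the loss class $\{L(f,\cdot,\cdot) : f \in {\cal C}_i\}$ is an indicator class of events coming from a set system of VC dimension at most $d$, so standard symmetrization followed by the Dudley/Haussler chaining bound (equivalently, the Rademacher-complexity bound for indicator VC classes) yields $\mathbb{E}[\Delta] \leq C_0 \sqrt{d/n_t}$ for a universal constant $C_0$. Third, because $L \in \{0,1\}$, flipping any single training sample in $\tilde{D}_t$ shifts every $\hat R_t(f)$, and therefore $\Delta$, by at most $1/n_t$; McDiarmid's bounded-differences inequality then gives $\Delta \leq \mathbb{E}[\Delta] + \sqrt{\log(1/\delta)/(2n_t)}$ with probability at least $1-\delta$. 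Substituting into $R(f^{\mathrm{ERM}}_i) - R(f^{*}_i) \leq 2\Delta$ produces $R(f^{\mathrm{ERM}}_i) \leq R(f^{*}_i) + 2C_0\sqrt{d/n_t} + \sqrt{2\log(1/\delta)/n_t}$, which is the claim with $C := 2C_0$.

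There is no genuine obstacle --- the argument is classical and the constants fall out cleanly, which is exactly why the theorem is attributed to \cite{boucheron_2005}. The only conceptual point worth flagging is that $f^{\mathrm{ERM}}_i$ is itself a function of the training data, so the pointwise Chernoff-type concentration invoked earlier in Section \ref{sec:pvalue} at a single fixed $f$ is inadequate here; passing through the supremum $\Delta$ (and hence invoking VC-uniform convergence rather than pointwise concentration) is what does the real work.
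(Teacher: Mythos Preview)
Your proposal is correct and is precisely the standard argument from \cite{boucheron_2005} that the paper invokes; the paper itself offers no proof beyond that citation, so you have simply filled in the details the authors left to the reference. Your three-term decomposition, VC uniform-deviation bound on $\mathbb{E}[\Delta]$, and McDiarmid concentration are exactly the ingredients used in that survey to obtain the $C\sqrt{d/n_t}+\sqrt{2\log(1/\delta)/n_t}$ excess-risk bound.
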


\textbf{Note:} The analysis above does not use the fact that we are restricted to distributions with densities, hence the same results of Theorem \ref{thm:pvalue} hold when we have general measures as in the case of Section \ref{sec:general-measures}.

\section{Candidate \texttt{MIMIC} Functions}
\label{sec:mimic}

\textbf{Conditional GAN:}
A promising candidate for mimicking samples from $p(\y \vert \z)$ is using conditional generative adversarial networks (CGAN)~\cite{mirza2014conditional}. Following the methodology in~\cite{mirza2014conditional}, we adversarially train a generator deep network $G(\z,\mathbf{s}) \in \mathbb{R}^{n_y}$ and a discriminator network $D(\y,\z) \in [0,1]$. Here, $\mathbf{s} \sim p(\mathbf{s})$ is a standard normal noise random variable of dimension $d_s$, which is specified in our experiments. The goal is to obtain a generator function $G(\z,\mathbf{s})$ such that given a $\z$, the distribution of $G(\z,\mathbf{s})$ over the randomness in $\mathbf{s}$ closely resembles that of $p(\y \vert \z)$. Once we have such a generator, given any $\mathbf{z}$ we can mimic samples from  $p(\y \vert \z)$, by randomly generating a noisy random variable $\mathbf{s}$ and evaluating the generator at the point $(\z,\mathbf{s})$. Once we have this neural network based mimic function $G(\z,\mathbf{s})$ we can develop a CI-Test using our Mimic and Classify philosophy.  This CI-Test is dubbed \textbf{MIMIFY-CGAN}. Additional details of the exact methodology can be found in the supplement.



\textbf{Regression based approaches:}
Now we introduce our second \texttt{MIMIC} approach which is based on the idea of regression. Given samples from a joint distribution $p(\y,\z)$, one can form a regression problem where the aim is to predict $\y$ given $\z $. This is classically solved by training a function $r:\mathbb{R}^{d_z} \rightarrow \mathbb{R}^{d_y}$ (in a class of functions $\mathcal{R}$), such that ideally $\eta(\y,\z,r) = \EE_{\Y,\Z \sim p(\y,\z)}[(Y - r(Z))^2 ]$ is minimized. This done by minimizing an empirical estimate of $\eta(\y,\z,r)$ by using the samples from the joint distribution. The function that minimizes the mean squared error $\eta(.)$ globally is the conditional expectation $\EE[\Y \vert \Z]$. Modern regression classes like gradient boosted trees and deep networks are extremely powerful and can fit the conditional expectation $\EE[\Y \vert \Z]$ very closely in most cases. This leads us to our regression based \texttt{MIMIC} function. The idea is to train a regression function $r(\z)$ that closely mimics the conditional expectation $\EE[\Y|\Z = \z]$, for any $\Z = \z$, given a data-set of samples from the joint $p(\y,\z)$. This can be done using standard regression techniques. Now, given any $\z$, we can evaluate $r(z)$ which resembles $\EE[\Y|\Z = \z]$. We can add a noise random variable $\mathbf{s} \in \mathbb{R}^{n_y}$ to create the variable $\hat{\y} = r(\z) + \mathbf{s}$. The random variable $\hat{\y}$ is thus centered approximately at $\EE[\Y|\Z = \z]$ and if the noise random variable is chosen appropriately, then the distribution of $\hat{\y}$ (denoted by say $q_{r}(\y | \z)$) is bound to have a significant overlap with $p(\y | \z)$, especially if the true conditional $p(\y | \z)$ is unimodal. This is precisely our \texttt{MIMIC} function. Our regression based \texttt{MIMIC} function is dubbed \textbf{MIMIFY-REG}. The exact methodology is described in the supplement.

\section{Empirical Results}
\label{sec:sims}
In this section we empirically validate our algorithms against state of the art methods, on synthetic and real data-sets. The algorithms under contention are: $(i)$ MIMIFY-CGAN: Our CGAN based \texttt{MIMIC} and \texttt{CLASSIFY} method. We use a fully connected generator with $2$ hidden layers. The discriminator is also fully connected with two hidden layers, the last layer being a sigmoid layer. The training is done according to the method followed in~\cite{mirza2014conditional}. The noise dimension is set to $d_{s} = 20$ in all our experiments, $(ii)$ MIMIFY-REG: Our regression based method. We use the XGB-Regressor in the scikit-learn API of XGBoost~\cite{chen2016xgboost} as our regression function. The classifier used in our \texttt{CLASSIFY} phase is also XGBoost, $(iii)$ KCIT~\cite{kcit_2011}: We use the implementation in the RCIT R package provided by the authors of~\cite{rcit_2017}, $(iv)$ RCIT~\cite{rcit_2017}: We use the implementation in the RCIT R packages, and $(v)$ CCIT~\cite{ccit_2017}: We use the python package provided by the authors of~\cite{ccit_2017}.\footnote{The software package for our implementation can be found here: (https://github.com/rajatsen91/mimic\_classify)}

{\bf Post-Nonlinear Noise Synthetic Experiments: } We test all our algorithms on a harder version of the post-nonlinear noise setting that has been used in~\cite{kcit_2011,ccit_2017,rcit_2017}. Motivated by applications in causal inference, in all our experiments $d_x = d_y = 1$ while the dimension of $z$ can scale. In our synthetic datasets, when the ground truth is $\X \ci \Y \vert \Z $ then the variables follows the relation $\X = f_1(A_x\Z + \eta_1)$ and $\Y = f_2(A_y\Z + \eta_2)$. When the ground truth is not CI, then $\Y = f_2(A_y\Z + a_{xy} \X + \eta_2)$ where $a_{xy}$ is a fixed constant.  Here, $A_x,A_y \in \mathbb{R}^{d_z \times 1}$ are matrices that are held fixed for generating the samples of a single data-set. $\eta_1$ and $\eta_2$ are zero-mean Gaussian noise of variance $0.25$. For each data-set $f_1$ and $f_2$ are non-linear functions chosen at random from the set of functions $\{x,x^2,x^3, \tanh(x), \exp(-x) \}$, for each data-set. 

 We plot the ROC-AUC achieved by the different algorithms as a function of $d_z$ in Fig.~\ref{fig:dim}. Each point in the plot involved generating $50$ data-sets in which $\X \ci \Y \vert \Z $ and $50$ data-sets where $\X$ is not independent of $\Y$ given $\Z $, each data-set containing $5000$ samples. The algorithms are run on each of the data-sets and then ROC-AUC is calculated by using the p-values generated and the ground-truth labels for each data-set. RCIT performs much worse than the other algorithms on this data-set and therefore has been omitted from the plot. It can be seen for the extreme case of $d_z = 200$, MIMIFY-GAN beats the other algorithms in terms of ROC-AUC. This can be attributed to the fact that CGANS can in fact fit $p(\y | \z)$ even when $\z$ is high-dimensional. MIMIFY-GAN achieved an ROC-AUC of $0.835$ even when $d_z = 300$. KCIT could not be run due to high run-times at this scale. 

\begin{figure*}
	\centering
	
	\subfloat[][]{\includegraphics[width = 0.31\linewidth]{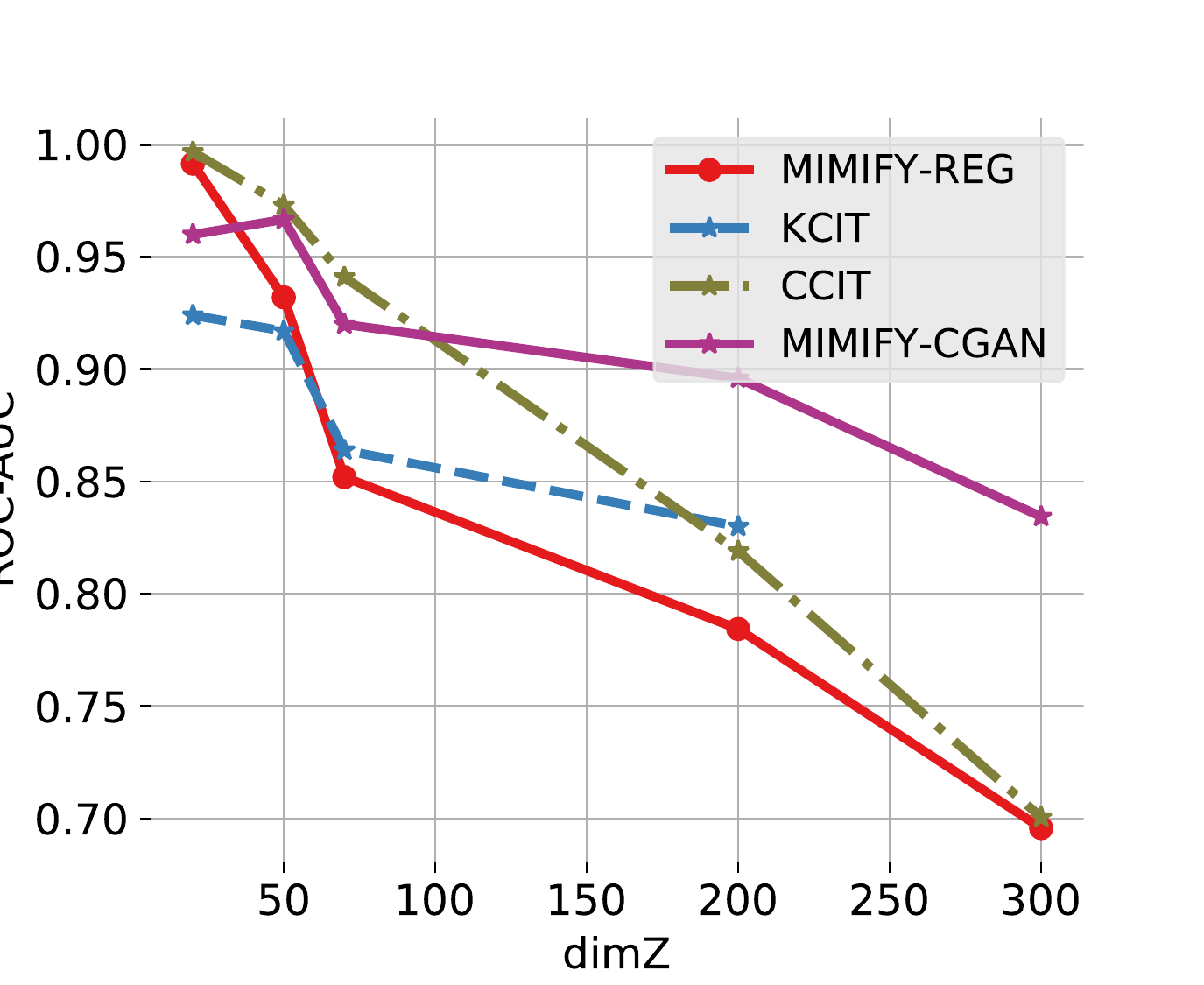}\label{fig:dim}} \hfill
	\subfloat[][]{\includegraphics[width = 0.31\linewidth]{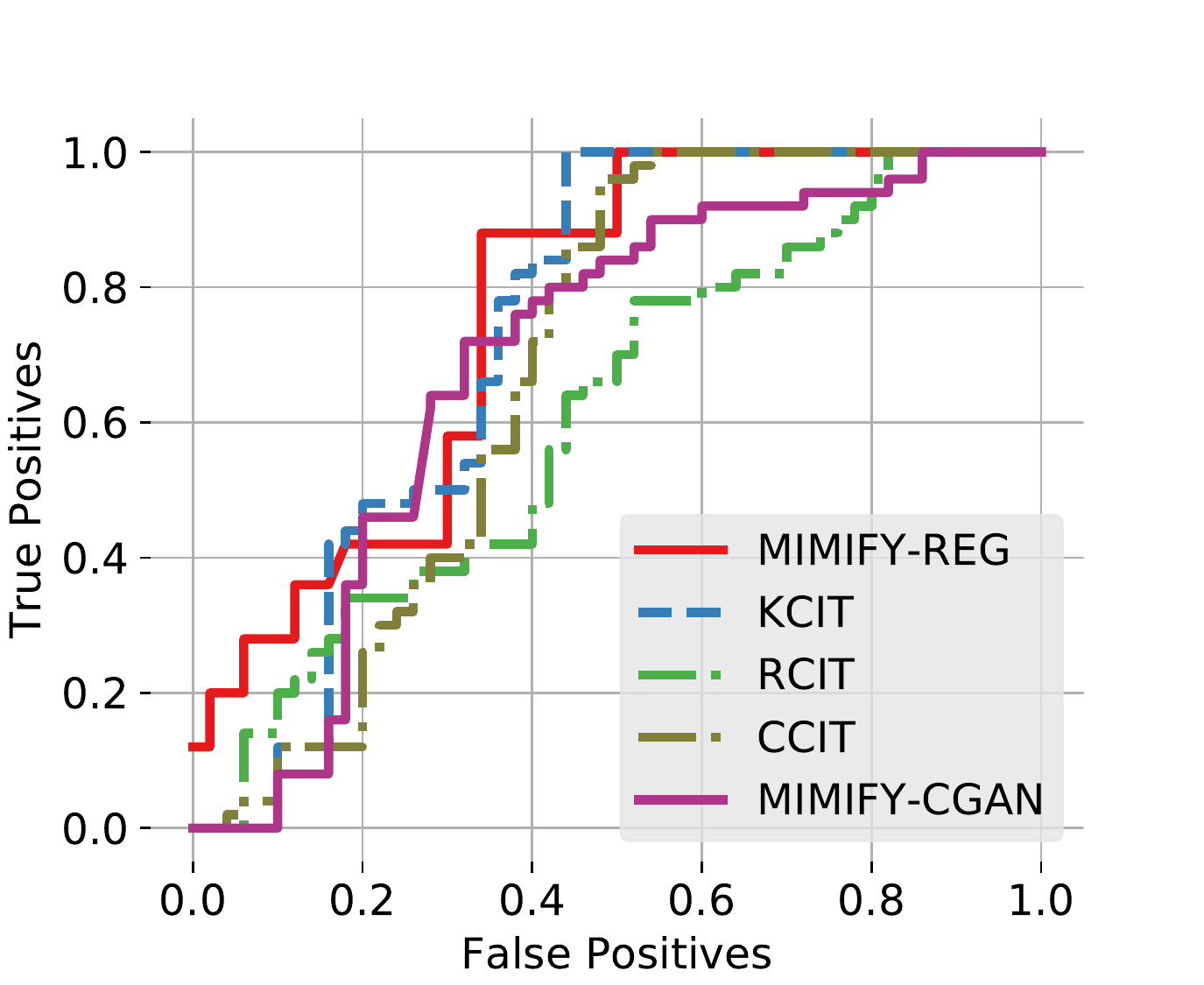}\label{fig:cyto}} \hfill
	\subfloat[][]{\includegraphics[width = 0.31\linewidth]{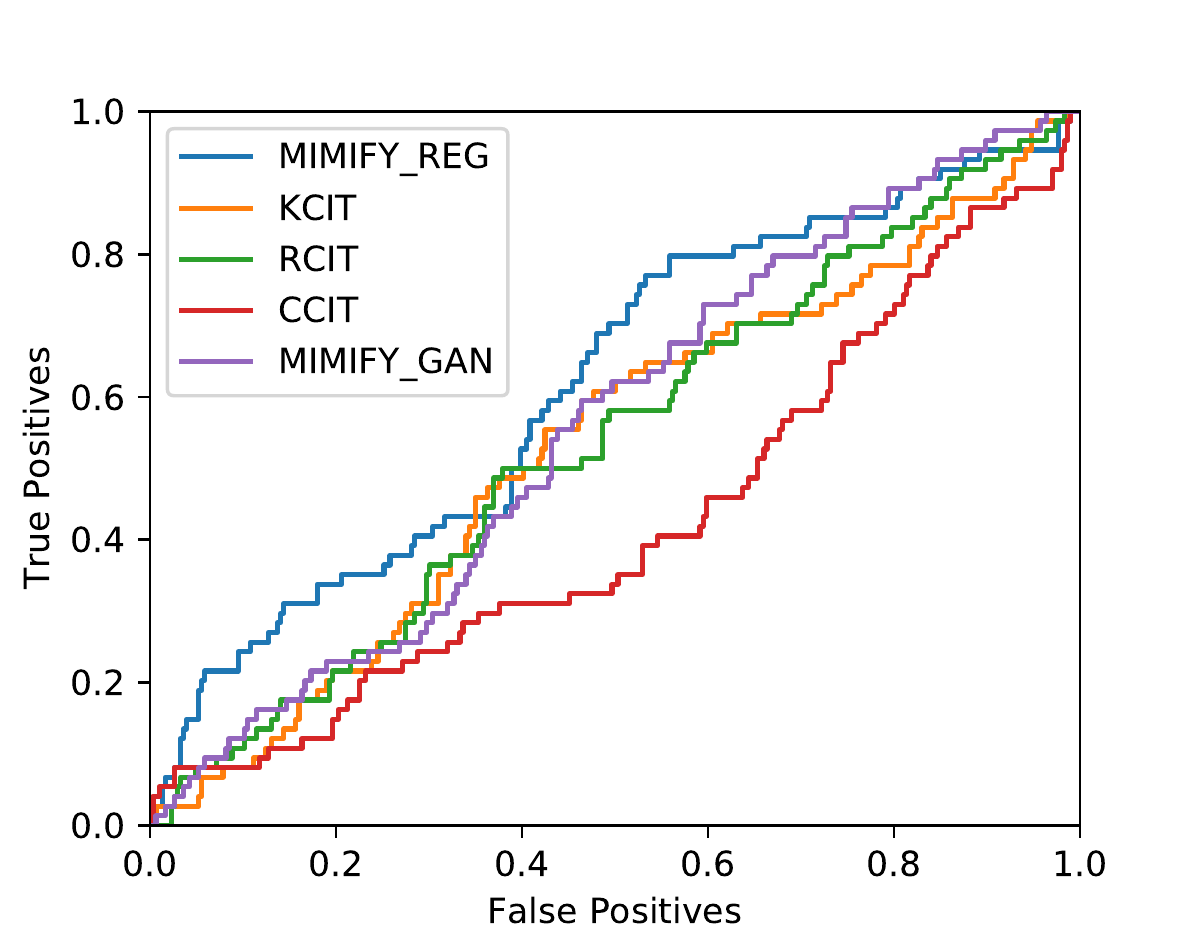}\label{fig:grn_roc}} \hfill
	\subfloat[][]{
		
		\resizebox{0.31\linewidth}{!}{
			\begin{tabular}[b]{ lccc }
				\hline
				Algo. & ROC-AUC\\ \midrule 
				MIMIFY-REG & {\color{red} 0.7638} \\ 
				KCIT & 0.7328 \\
				MIMIFY-GAN & {\color{blue} 0.6891} \\ 
				CCIT & 0.6816 \\
				RCIT & 0.6135\\\bottomrule
				\hline
			\end{tabular}
		}	
		\label{fig:tab_cytometry}
		}
	\subfloat[][]{
		
		\resizebox{0.31\linewidth}{!}{
			\begin{tabular}[b]{ lccc }
				\hline
				Algo. & ROC-AUC\\ \midrule 
				MIMIFY-REG & {\color{red} 0.61645} \\ 
				MIMIFY-GAN & {\color{blue} 0.55679} \\
				RCIT & 0.53511\\
				KCIT & 0.53175 \\
				CCIT & 0.42439 \\ \bottomrule
				\hline
			\end{tabular}
		}
		\label{fig:tab_grn}
		
	} 
	\caption{ \small In (a) we plot the performance of our algorithms in the post-nonlinear noise synthetic data. In generating each point in the plots, $100$ data-sets are generated where half of them are according to $\mathcal{H}_0$ while the rest are according to $\mathcal{H}_1$. The algorithms are run on each of them, and the ROC AUC score is plotted. The number of samples is $n = 1000$, while the dimension of $Z$ varies. RCIT is omitted in this plot, as it performs poorly and outside the range of values plotted. In $(b)$ we plot the ROC curve for all three algorithms based on the data from for the flow-cytometry dataset. In $(c)$ ROC curves for gene regulatory network inference from DREAM dataset with various CIT methods are shown. The ROC AUC score for each of the algorithms in cytometry and gene regulatory networks is provided in $(d)$ and $(e)$ respectively, considering their respective ground-truth causal graphs. }
	\label{fig:cytoresults}
\end{figure*}
 
{\bf Flow-Cytometry Data: } We use our CI testing algorithm to verify CI relations in the protein network data from the flow-cytometry dataset~\cite{sachs2005causal}, which gives expression levels of $11$ proteins under various experimental conditions. We use the reconstructed causal graph in~\cite{sachs2005causal} (Fig. 1(b) in~\cite{mooij2013cyclic}) as the ground truth network. The CI relations are generated as follows: for each node $\X$ in the graph, identify the set $\Z$ consisting of its parents, children and parents of children in the causal graph. Conditioned on this set $\Z$, $\X$ is independent of every other node $\Y$ in the graph (apart from the ones in $\Z$). We use this to create $50$ randomly chosen CI relations. In order to evaluate false positives of our algorithms, we also need relations such that $\X \centernot{\ci} \Y \vert \Z$. For, this we observe that if there is an edge between two nodes, they are never CI given any other conditioning set. For each graph we generate $50$ such non-CI relations, where an edge $\X \leftrightarrow \Y$ is selected at random and a conditioning set of size $5$ is randomly selected from the remaining nodes. We construct $50$ such negative examples for each graph. For the sake of reproducibility we include all these relations as a csv file in our supplementary,  where the first $50$ relations are CI and the rest are not CI. The column $X$, $Y$ denote the nodes and $Z$ the conditioning set. The ROC (Receiver Operating Characteristics) curve is plotted from the results of all the algorithm in Fig.~\ref{fig:cyto}. The corresponding ROC-AUC achieved are given in Table~\ref{fig:tab_cytometry}. MIMIFY-REG outperforms the other algorithms on this data-set achieving an ROC-AUC of $0.7638$.

{\bf Gene Regulatory Network Inference for DREAM dataset: } In this experiment, we test and evaluate different algorithms over the \textit{in silico} cell development process dataset taken from~\cite{hill2016inferring}. The dataset represents a cell development process including 20 genes, which consists of 60 separate experiments each involving time-series of the length 11. Therefore the total number of samples is 660. The ground-truth regulatory network is known. Form the process, we created a matrix of pairwise CI p-values in the form of $CIT( X_i, X_j | \{ X_i, X_j \}^c )$ given the samples, and then evaulated the performance of each algorithm through their respective ROC curves. The plotted ROC curves and the obtained AUC values are presented in Fig.~\ref{fig:grn_roc} and Table~\ref{fig:tab_grn} respectively.

\section{Conclusion and Future Work}
The paradigm of \textbf{Mimic and Classify} is proposed as per \textit{meta}-algorithm \ref{algm:CI-Test} with provable guarantees on the null distribution and analysis for general probability measures, thus suggestive of a two-step approach: (a) \textit{Mimic} the CI Distribution and (b) \textit{Classify} to distinguish the joint and CI distribution. New candidates are discovered such as cGANs and Regression based methods, qualifying for mimicking step and they outperform on experiments. Future directions include discovering efficient candidates for both mimic and classify steps to boost CI Testing performance especially in high dimensional and moderate to small sample regime. Bolstered by strong CI Testing performance, we would like to see their utility in causal inference problems. 

\bibliographystyle{plain}
\bibliography{CI-ref.bib}


\newpage
\begin{appendices}

\section{Exact Methodology of \textbf{MIMIFY-GAN}}

 The exact methodology followed for \textbf{MIMIFY-GAN} is as follows: $(i)$ Given $n$ samples $\{\x_i,\y_i,\z_i\}$ ($i = 1,..,n$) we randomly subdivide the samples into three disjoint sets $\mathcal{D}_1$, $\mathcal{D}_2$ and $\mathcal{D}_3$, with $|\mathcal{D}_i| = n/3$. The samples in $\mathcal{D}_1$ are kept as it is and labeled $1$, $(ii)$ A CGAN $G(\z,\mathbf{s})$ is trained using the $\y,\z$ coordinates of the samples in $\mathcal{D}_2$ in order to mimic $p(\y\vert \z)$, $(iii)$ For every sample $(\x,\y,\z) \in \mathcal{D}_3$ we create a new sample $(\x,\hat{\y},\z)$ where $\hat{\y} = G(\z,\mathbf{s})$ and $\mathbf{s}$ is randomly generated from $p(\mathbf{s})$. These new samples $\{(\x,\hat{\y},\z)\}$ are labeled $0$ and added to the data-set of previously labeled samples in step $(i)$. Thus we have a labeled classification data-set. $(iv)$ We randomly subdivide the labeled data-set into training and test sets. Now the steps 5 - 10 of Algorithm~\ref{algm:CI-Test} can be followed using a standard classifier (XGBoost~\cite{chen2016xgboost} in our implementation) in order to obtain a hypothesis. 

\section{Exact Methodology of \textbf{MIMIFY-REG}}
 The exact methodology for \textbf{MIMIFY-REG} is as follows: $(i)$ Given $n$ samples $\{\x_i,\y_i,\z_i\}$ ($i = 1,..,n$) we randomly subdivide the samples into three disjoint sets $\mathcal{D}_1$, $\mathcal{D}_2$ and $\mathcal{D}_3$, with $|\mathcal{D}_i| = n/3$. The samples in $\mathcal{D}_1$ are kept as it is and labeled $1$, $(ii)$ We fit a regression function (XGBoost~\cite{chen2016xgboost} regressor in our implementation) to predict $\y$ given $\z$, using the samples in $\mathcal{D}_2$. Let $r(.)$ denote the trained regression function, $(iii)$ For every sample $(\x,\y,\z) \in \mathcal{D}_3$ we create a new sample $(\x,\hat{\y},\z)$ where $\hat{\y} = r(\z) + s$ and $\mathbf{s} \in \mathbb{R}^{d_y}$ is a random noise. In order to keep our model versatile, we use two noise models for generating $\mathbf{s}$. Our first noise model is where $\mathbf{s}$ is a multi-variate \textit{Gaussian} vector with zero mean and covariance equal to that of the residual vector $\y_r = \y - r(\z)$, measured empirically from the samples in $\mathcal{D}_2$. In our second noise model, $\mathbf{s}$ is such that each coordinate is a Laplace random variable. The variance of the $i$-th coordinate is set equal to that of the $i$-th coordinate of $\y_r$ measured from the empirical data in $\mathcal{D}_2$. While processing every sample $(\x,\y,\z) \in \mathcal{D}_3$, with a probability of $0.3$ we add Gaussian noise, otherwise we add Laplace noise. Thus, we have created a mimicked data-set of $n/3$ samples $\{(\x,\hat{\y},\z)\}$, all of which are labeled $0$. These labeled samples are added to the original samples in step $(i)$ yielding our classification data-set. $(iv)$ We randomly subdivide the labeled data-set into training and test sets. Now the steps 5 - 10 of Algorithm~\ref{algm:CI-Test} can be followed using a standard classifier (XGBoost~\cite{chen2016xgboost} in our implementation) in order to obtain a hypothesis.

\section {Relation between Bayes Optimal Classifier and Total Variation Distance }
\subsection{Proof of Lemma \ref{lem:err}}

   The proof follows from Lemma (\ref{lem:bayesopt}) and Eq. (\ref{tv:int}), by using a simple algebraic identity, $\min(a,b) = \frac{a+b-|a-b|}{2}$. \qed

\subsection{Proof of Lemma \ref{lem:restrict}}
For any coupling $\pi \in \Pi^*$, for all measurable sets $B$ in ${\cal F}$, we define a measure $\nu_{\pi}$ with sample space $\mathbb{R}^{1 \times s}$ and sigma algebra ${\cal F}$ as follows:
  $\nu_{\pi} (B) = \mathbb{E}_{\pi} [ \mathbf{1}_{\{\mathbf{w}=\mathbf{\tilde{w}}= \mathbf{u}\}} \mathbf{1}_{\{\mathbf{u} \in B\}}]$. 
Note that, this definition is possible because of the assumption that $\{(\mathbf{u},\mathbf{u}):\mathbf{u} \in B\}$ is measurable in ${\cal F}_{\pi}$ for every  $B \in {\cal F}$, for every coupling $\pi\in\Pi^*$. Let the marginal measures of $\pi$ be $P_\pi$ and $Q_\pi$ with densities $p_\pi$ and $q_\pi$ , respectively. 

Consider a set $B \in {\cal F}$ that has zero Lebesgue measure. This implies that both $P_\pi$ and $Q_\pi$ have zero measure on $B$. We observe that $\nu_\pi(B) \leq P_\pi(B), Q_\pi(B)$ for all measurable $B \in {\cal F}$ because the marginal measures of $\pi$ are $P_\pi$ and $Q_\pi$, proving that $\nu_{\pi}(B)=0$. By the Radon-Nikodym theorem, there is a measurable density function $g_{\pi}(\mathbf{u})$ such that: 
  \begin{align}\label{eqn:gdensity}
  \nu_{\pi}(B) = \int \limits_{B} g_{\pi}(\mathbf{u}) d\mathbf{u}, ~ \forall B \in {\cal F}, \forall \pi\in\Pi^* .
  \end{align}

  We already observed that $\nu_\pi(B) \leq P_\pi(B), Q_\pi(B)$ for all measurable $B \in {\cal F}$. And all the measures $\nu_{\pi},P_\pi$ and $Q_\pi$ are absolutely continuous with respect to the Lebesgue measure. This implies that their density functions satisfy the inequalities almost surely. That is, 
  \begin{align}\label{eqn:gdensityg}
  g_{\pi}(\mathbf{u}) \leq \min (p_\pi(\mathbf{u}),q_\pi(\mathbf{u})) \text{ \textit{a.s.}}
  \end{align} 
  
  Now, from the definition of total variation distance in Eq. (\ref{tv:cost}) note that:
  \begin{align}
  D_{\mathrm{TV}}(P,Q) &=  \inf \limits_{\pi \in \Pi} \mathbb{E}_{\pi}[\mathbf{1}_{\{\mathbf{w}\neq \mathbf{\tilde{w}}\}}] \nonumber\\
  & \le  \inf \limits_{\pi \in \Pi^*} \mathbb{E}_{\pi}[\mathbf{1}_{\{\mathbf{w}\neq \mathbf{\tilde{w}}\}}]\nonumber\\
  & \overset{(a)}{\le} \mathbb{E}_{\pi}[\mathbf{1}_{\{\mathbf{w}\neq \mathbf{\tilde{w}}\}}]\nonumber\\
  & = \nu_{\pi}(\Omega_\pi) \nonumber\\
  & \overset{(b)}{\le} \int \min (p(\mathbf{u}),q(\mathbf{u})) d\mathbf{u}\label{eq:dtvmin}
  \end{align}
  where in (a) $\pi$ is some coupling in $\Pi^*$, while (b) follows using the Eq. (\ref{eqn:gdensity}) and (\ref{eqn:gdensityg}). The equality in Eq. (\ref{eq:dtvmin}) can be proven by showing a coupling that satisfies the inequality exactly. Such a coupling has been constructed for discrete pmfs $p$ and $q$ in \cite{sason2013entropy}(Theorem $1$). The exact same construction can be extended by using the density functions $p$ and $q$ (as in Eqs. $(6), (7)$ and $(8)$ in \cite{sason2013entropy}).  Since inequalities (a) and (b) hold with equality, we have:
   \begin{align}
 D_{\mathrm{TV}}(P,Q) &=  \inf \limits_{\pi \in \Pi} \mathbb{E}_{\pi}[\mathbf{1}_{\{\mathbf{w}\neq \mathbf{\tilde{w}}\}}] \nonumber\\
  & = \inf \limits_{\pi \in \Pi^*} \mathbb{E}_{\pi}[\mathbf{1}_{\{\mathbf{w}\neq \mathbf{\tilde{w}}\}}]\nonumber
 \end{align}
\qed
\section{Analysis of Main Algorithm \ref{algm:CI-Test} - Proof of Theorem \ref{thm:maintv}}

\textbf{Note:} To begin our analysis, please note that as in Proof of Lemma \ref{lem:restrict}, one can also differently have $\nu_{\pi}$ defined $\forall B\in {\cal F}$ as :
\begin{align}
\nu_{\pi} (B) = \mathbb{E}_{\pi} [ \mathbf{1}_{\{\mathbf{w}=\mathbf{\tilde{w}}= \mathbf{u}\}} \mathbf{1}_{\{\mathbf{u} \in B\}}\mathbb{E}_{\pi}[\mathbf{1}_{\{\mathbf{v}=\mathbf{\tilde{v}}\}}|\mathbf{w}=\mathbf{\tilde{w}}=\mathbf{u}, \mathbf{u}\in\ B]]\label{eqn:gdensity-1}
\end{align}
where $\pi\in \Pi^*$ is a coupling preserving $P$ and $Q$ as the marginal measures on tuple $(\mathbf{V}, \mathbf{W})$ which have densities $p(\cdot)$ and $q(\cdot)$ with respect to the  Lesbegue measure. Since $\mathbb{E}_{\pi}[\mathbf{1}_{\{\mathbf{v}=\mathbf{\tilde{v}}\}}|\mathbf{w}=\mathbf{\tilde{w}}=\mathbf{u}, \mathbf{u}\in\ B]\le 1$,  we have $\nu_{\pi}(B)\le P(\mathbf{W}\in B), Q(\mathbf{\tilde{W}}\in B)$. Thus by the same arguments as in the Lemma \ref{lem:restrict}, $\nu_{\pi}(\cdot)$ defined as per Eq. (\ref{eqn:gdensity-1}) also exhibits a density $g_{\pi}(\mathbf{u_w})$ with respect to the Lesbegue measure such that $g_{\pi}(\mathbf{u_w}) \leq p(\mathbf{u_w}),q(\mathbf{u_w})$ almost surely. 

 Now, the first equality in the theorem is obvious and follows from Algorithm \ref{algm:CI-Test}, Lemma \ref{lem:err}. We establish only the second inequality using the following chain:
  \begin{align}
    & D_{\mathrm{TV}}(p(\mathbf{z},\mathbf{x},\mathbf{y}),p(\mathbf{x},\mathbf{z}) q(\mathbf{y}|\mathbf{z}))  
      \nonumber \\
       \overset{(a)}{=} &1-  \max \limits_{\pi \in \Pi^{*}(p(\mathbf{z},\mathbf{x},\mathbf{y}),p(\mathbf{\tilde{z}}) q(\mathbf{\tilde{y}}|\mathbf{\tilde{z}}) p(\mathbf{\tilde{x}}|\mathbf{\tilde{z}}))} \mathbb{E}_{\pi}[\mathbf{1}_{\{\mathbf{x}=\mathbf{\tilde{x}},\mathbf{y}=\mathbf{\tilde{y}},\mathbf{z}=\mathbf{\tilde{z}}\}}] \nonumber \\ 
         \overset{}{=} &1-  \max \limits_{\pi \in \Pi^{*}(p(\mathbf{z},\mathbf{x},\mathbf{y}),p(\mathbf{\tilde{z}}) q(\mathbf{\tilde{y}}|\mathbf{\tilde{z}}) p(\mathbf{\tilde{x}}|\mathbf{\tilde{z}}))} \mathbb{E}_{\pi}[\mathbf{1}_{\{\mathbf{y}=\mathbf{\tilde{y}},\mathbf{z}=\mathbf{\tilde{z}}\}}\mathbb{E}_{\pi}[\mathbf{1}_{\{\mathbf{x}=\mathbf{\tilde{x}}\}}|\mathbf{y}=\mathbf{\tilde{y}},\mathbf{z}=\mathbf{\tilde{z}}]] \nonumber \\ 
     \overset{(b)}{=} & 1- \max \limits_{\pi \in \Pi^{*}(p(\mathbf{z},\mathbf{x},\mathbf{y}),p(\mathbf{\tilde{z}}) q(\mathbf{\tilde{y}}|\mathbf{\tilde{z}}) p(\mathbf{\tilde{x}}|\mathbf{\tilde{z}}))} \int g_{\pi}(\mathbf{u_y},\mathbf{u_z}) \mathbb{E}_{\pi}[\mathbf{1}_{\{\mathbf{x}=\mathbf{\tilde{x}}\}}|\mathbf{y}=\mathbf{\tilde{y}}=\mathbf{u_y},\mathbf{z}=\mathbf{\tilde{z}}=\mathbf{u_z}] d(\mathbf{u_y},\mathbf{u_z}) \nonumber \\
     \overset{(c)}{\geq} & 1- \max \limits_{\pi \in \Pi^{*}(p(\mathbf{z},\mathbf{x},\mathbf{y}),p(\mathbf{\tilde{z}}) q(\mathbf{\tilde{y}}|\mathbf{\tilde{z}}) p(\mathbf{\tilde{x}}|\mathbf{\tilde{z}}))}  \int g_{\pi}(\mathbf{u_y},\mathbf{u_z}) \epsilon(\mathbf{u_y},\mathbf{u_z}) d(\mathbf{u_y},\mathbf{u_z}) \nonumber \\
     \overset{(d)}{\geq} & 1 -\max \limits_{\pi \in \Pi(p(\mathbf{z},\mathbf{y}),p(\mathbf{\tilde{z}}) q(\mathbf{\tilde{y}}|\mathbf{\tilde{z}}))} \int g_{\pi}(\mathbf{u_y},\mathbf{u_z}) \epsilon(\mathbf{u_y},\mathbf{u_{z}}) d(\mathbf{u_y},\mathbf{u_z}) \nonumber \\
     \overset{(e)}{\geq} & 1 - \int \min (p(\mathbf{u_y},\mathbf{u_z}),p(\mathbf{u_z}) q(\mathbf{u_y}|\mathbf{u_z})) \epsilon(\mathbf{u_y},\mathbf{u_z}) d(\mathbf{u_y},\mathbf{u_z}) \label{eqn:firsttv}
  \end{align}
  We have the following justifications for the above inequality chain: 
  \begin{itemize}  
  \item[(a)] follows from the definition of the total variation distance as in Eq. (\ref{tv:cost}) and the restriction of coupling set $\Pi$ to $\Pi^*$ as observed via Lemma \ref{lem:restrict}.
  \item[(b)] follows from the note in the beginning of this section which alternatively defines $\nu_{\pi}(\cdot)$ as in Eq. (\ref{eqn:gdensity-1}). Thus $\nu_{\pi}(\Omega_{\pi)}= \mathbb{E}_{\pi}[\mathbf{1}_{\{\mathbf{y}=\mathbf{\tilde{y}},\mathbf{z}=\mathbf{\tilde{z}}\}}\mathbb{E}_{\pi}[\mathbf{1}_{\{\mathbf{x}=\mathbf{\tilde{x}}\}}|\mathbf{y}=\mathbf{\tilde{y}},\mathbf{z}=\mathbf{\tilde{z}}]]$
   \item[(c)] follows from the fact that given $\mathbf{y}=\mathbf{\tilde{y}}=\mathbf{u_y}$ (fixed constant) and similarly $\mathbf{z}=\mathbf{\tilde{z}}=\mathbf{u_z}$, $\mathbb{E}_{\pi}[\mathbf{1}_{\{\mathbf{x}=\mathbf{\tilde{x}}\}}|\mathbf{u_y}, \mathbf{u_z}]=P_{\pi}(\mathbf{x}=\mathbf{\tilde{x}}|\mathbf{u_y},\mathbf{u_z})$ is the probability with respect to some coupling $\pi$ between $p(\mathbf{x}|\mathbf{u_z})$ and $p(\mathbf{\tilde{x}}|\tilde{u_y},\mathbf{u_z})$. Any such probability is bounded above by $\epsilon(\mathbf{u_y},\mathbf{u}_z)$ according to equation (\ref{eqn:coupling}).
  \item[(d)] follows from the fact that terms inside the integral in (c) depend only on the marginal coupling under $\pi$ between $(\mathbf{y},\mathbf{z})$ variables.
  \item[(e)] is a consequence of the note in the beginning of this section.
   \end{itemize} 
  
  We also have the following equality as a result of Corollary \ref{corollary0}:
   \begin{align}\label{eqn:secondtv}
      D_{\mathrm{TV}}(p(\mathbf{z}) q(\mathbf{y}|\mathbf{z}), p(\mathbf{y},\mathbf{z})) = 1 - \int \min (p(\mathbf{u_y},\mathbf{u_z}),p(\mathbf{u_z}) q(\mathbf{u_y}|\mathbf{u_z})) d(\mathbf{u_y},\mathbf{u_z}) 
   \end{align}
 
 Subtracting (\ref{eqn:secondtv}) from (\ref{eqn:firsttv}), we have:
   \begin{align}\label{eqn:finaltv}
      & D_{\mathrm{TV}}(p(\mathbf{z},\mathbf{x},\mathbf{y}),p(\mathbf{x},\mathbf{z}) q(\mathbf{y}|\mathbf{z})) - D_{\mathrm{TV}}(p(\mathbf{z}) q(\mathbf{y}|\mathbf{z}), p(\mathbf{y},\mathbf{z})) \\
      & \geq \int \min (p(\mathbf{u_y},\mathbf{u_z}),p(\mathbf{u_z}) q(\mathbf{u_y}|\mathbf{u_z})) (1-\epsilon(\mathbf{u_y},u_{\mathbf{z}})) d(\mathbf{u_y},\mathbf{u_z})
   \end{align}
\qed

\section{Additional Theorems in proving Theorem \ref{thm:mainthm1}}

\subsection{Proof of Theorem \ref{theorem3}}
 The result follows from Theorem \ref{thm:maintv} and the fact that conditional dependence implies that $\epsilon(\textbf{y},\mathbf{z})< 1$ is true with a non zero-measure with respect to distribution given by the density $p(\textbf{y},\mathbf{z})$. \qed

\subsection{Proof of Theorem \ref{theorem4}}

  Conditional independence means that $p(\mathbf{x},\mathbf{y},\mathbf{z})=p(\mathbf{z})p(\mathbf{y}|\mathbf{z})p(\mathbf{x}|\mathbf{z})$. Further, 
  \begin{align}\label{eq:twotvs}
  D_{\mathrm{TV}} (p(\mathbf{z})p(\mathbf{y}|\mathbf{z}),p(\mathbf{z})q(\mathbf{y}|\mathbf{z})) = D_{\mathrm{TV}} (p(\mathbf{x}|\mathbf{z})p(\mathbf{z})p(\mathbf{y}|\mathbf{z}),p(\mathbf{x}|\mathbf{z})p(\mathbf{z})q(\mathbf{y}|\mathbf{z})) 
  \end{align}
Combining (\ref{eq:twotvs}) with the first equality in Theorem \ref{thm:maintv}, we have the result stated.  \qed

\section{Additional Corollaries of Theorem \ref{thm:mainthm1}}
\subsection{Proof of Corollary \ref{corollary1}}

We first observe that due to triangle inequality we have the following:
 \begin{align}\label{eqn:triang}
 & D_{\mathrm{TV}}(p(\mathbf{z},\mathbf{x},\mathbf{y}),p(\mathbf{z}) q(\mathbf{y}|\mathbf{z}) p(\mathbf{x}|\mathbf{z})) - D_{\mathrm{TV}}(p(\mathbf{z}) q(\mathbf{y}|\mathbf{z}), p(\mathbf{y},\mathbf{z})) \nonumber \\
  & \overset{(a)}{=} D_{\mathrm{TV}}(p(\mathbf{z},\mathbf{x},\mathbf{y}),p(\mathbf{z}) q(\mathbf{y}|\mathbf{z}) p(\mathbf{x}|\mathbf{z})) - D_{\mathrm{TV}}(p(\mathbf{x}|\mathbf{z})p(\mathbf{z}) q(\mathbf{y}|\mathbf{z}), p(\mathbf{x}|\mathbf{z}) p(\mathbf{y},\mathbf{z})) \nonumber \\
  & \overset{(b)}{\leq} D_{\mathrm{TV}}(p(\mathbf{z},\mathbf{x},\mathbf{y}),p(\mathbf{z}) p(\mathbf{y}|\mathbf{z}) p(\mathbf{x}|\mathbf{z}))
 \end{align}
 where (a) follows from Eq. (\ref{eq:twotvs}) and (b)  follows from triangle inequality for the total variation distance.
 
When you set $q(\mathbf{y}|\mathbf{z})=p(\mathbf{y}|\mathbf{z})$, lower bound in Theorem \ref{thm:maintv} matches the upper bound in Eq. (\ref{eqn:triang}). This implies the variational result stated. \qed

\subsection{Proof of Corollary \ref{corollary2}}

   Given the conditions, we observe that $q(\cdot) \geq p(\cdot) \frac{1}{2ab}$. Hence, the statement of Theorem \ref{thm:maintv} can be written as: 
   \begin{align}
      2\lvert \mathbb{E}_D[e(f^{*}_1,D_s)]- \mathbb{E}_D[e(f^{*}_2,D^{-\mathbf{x}}_s)]  \rvert & \geq 
      \frac{1}{2ab}\int \limits_{y,\mathbf{z}} p(\mathbf{z}) p(y|\mathbf{z}) (1-\epsilon(y,\mathbf{z})) d(y,\mathbf{z}) \nonumber \\
      \hfill & \overset{(a)}{=} \frac{1}{2ab} \int \limits_{y,\mathbf{z}} p(\mathbf{z}) p(y|\mathbf{z})  \left( \inf_{\pi \in \Pi(p(\mathbf{x}|\mathbf{z}),p(\mathbf{x}'|y,\mathbf{z}))} \mathbb{E}_{\pi}[\mathbf{1}_{\{\mathbf{x} \neq \mathbf{x}'\}}] \right) d(y,\mathbf{z}) \nonumber \\
      \hfill & \overset{(b)}{=}  \frac{1}{2ab} \int \limits_{y,\mathbf{z}} p(\mathbf{z}) p(y|\mathbf{z}) D_{\mathrm{TV}}(p(\mathbf{x}|\mathbf{z}),p(\mathbf{x}|y,\mathbf{z})) d(y,\mathbf{z}) \nonumber \\
      \hfill & = \frac{1}{2ab}  D_{\mathrm{TV}} (p(y,\mathbf{z})p(\mathbf{x}|\mathbf{z}),p(y,\mathbf{z}) p(\mathbf{x}|y,\mathbf{z}))
    \end{align}   
where (a) is by the definition of $\epsilon(y,\mathbf{z})$ and (b)  is due to the definition of total variation distance as in Eq. (\ref{tv:cost}).\qed

\section{General Measures - Proof of Theorem \ref{thm:mainthm2}}

 Let $P(\cdot)$ and $Q(\cdot)$ be two different probability measures on $\mathbf{R}^{1 \times s}$ and absolutely continuous with another measure $\mu$. Hence corresponding Radon Nikodym derivatives are respectively, $\frac{dP}{d\mu}$ and $\frac{dQ}{d\mu}$. Form a data set $D$ containing $p$ i.i.d. samples of the form $(\mathbf{w},\ell)$ where $\ell$ is a Bernoulli random variable with bias probability $0.5$. If $\ell=1$, then $\mathbf{w} \sim P(\cdot)$ and when $\ell=0$, then $\mathbf{w} \sim Q(\cdot)$. Considering the space of classifiers $f:\mathbf{w} \rightarrow \{0,1\}$ and $\mathbb{E}_{D}[e(f,D)]$, the expected error for any classifier $f(\cdot)$, we have:
 
 \begin{lemma}
 The Bayes optimal classifier denoted by $f^{*}:\mathbf{w} \rightarrow \{0,1\}$ is: $\ell=1$ if $\mathbf{w}\in A$ and $\ell=0$ otherwise, where $A = \{\mathbf{w} : \frac{dP}{d\mu}(\mathbf{w}) > \frac{dQ}{d\mu} (\mathbf{w})\}$ and
 
 \begin{align}
     & \min \limits_{f} \mathbb{E}_D[e(f,D)] \nonumber \\
     & =  \mathbb{E}_D[e(f^{*},D)] \nonumber \\
     & =  \frac{1}{2}\int \min (\frac{dP}{d\mu}(\mathbf{w}),\frac{dQ}{d\mu}(\mathbf{w})) d\mu(\mathbf{w}) \nonumber\\
     & =  \frac{1}{2}-\frac{1}{2} D_{\mathrm{TV}}(P,Q) \label{chain-dc-1}
 \end{align}
 
Further we can have $\pi$ restricted to $\Pi^*$ and $\nu_{\pi}(\cdot)$ can be defined equivalently here as in Proof for Lemma \ref{lem:restrict}. We then have $\nu_{\pi}$ absolutely continuous with respect to $\mu$ such that $\frac{d\nu_{\pi}}{d\mu}\leq \min (\frac{dP}{d\mu},\frac{dQ}{d\mu})$. 
 \end{lemma}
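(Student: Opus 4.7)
The plan is to mirror the density-case proofs of Lemmas \ref{lem:bayesopt}, \ref{lem:err}, and \ref{lem:restrict} while replacing every appearance of $p(\mathbf{w})\,d\mathbf{w}$ (Lebesgue integration) with $\frac{dP}{d\mu}(\mathbf{w})\,d\mu(\mathbf{w})$ (integration with respect to the common dominating measure $\mu$), appealing to the Radon--Nikodym theorem wherever a density was previously used. First I would unfold the expected classification error of an arbitrary measurable $f:\mathbf{w}\mapsto\{0,1\}$ into
$$\mathbb{E}_D[e(f,D)] = \tfrac{1}{2}\int \mathbf{1}_{\{f(\mathbf{w})=0\}}\,\tfrac{dP}{d\mu}(\mathbf{w})\,d\mu(\mathbf{w}) + \tfrac{1}{2}\int \mathbf{1}_{\{f(\mathbf{w})=1\}}\,\tfrac{dQ}{d\mu}(\mathbf{w})\,d\mu(\mathbf{w}),$$
and then minimize pointwise: at each $\mathbf{w}$ the integrand is minimized by picking $f(\mathbf{w})=1$ exactly when $\tfrac{dP}{d\mu}(\mathbf{w})>\tfrac{dQ}{d\mu}(\mathbf{w})$, yielding the claimed Bayes rule on $A$ and the identity $\min_f \mathbb{E}_D[e(f,D)] = \tfrac{1}{2}\int \min\bigl(\tfrac{dP}{d\mu},\tfrac{dQ}{d\mu}\bigr)\,d\mu$.

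Next I invoke the elementary identity $\min(a,b) = \tfrac{a+b-|a-b|}{2}$ inside the integral; since $P$ and $Q$ are probability measures, their Radon--Nikodym derivatives integrate to $1$ against $\mu$, so the expression collapses to $\tfrac{1}{2}-\tfrac{1}{4}\int \bigl|\tfrac{dP}{d\mu}-\tfrac{dQ}{d\mu}\bigr|\,d\mu$. To recover $\tfrac{1}{2}-\tfrac{1}{2}D_{\mathrm{TV}}(P,Q)$, I need the general-measure analogue of (\ref{tv:int}), namely $D_{\mathrm{TV}}(P,Q)=\tfrac{1}{2}\int \bigl|\tfrac{dP}{d\mu}-\tfrac{dQ}{d\mu}\bigr|\,d\mu$; this is a standard fact valid for any common dominating $\mu$ (taking $\mu=P+Q$ is convenient) and is independent of the choice of $\mu$, so the equivalent definitions of Definition \ref{def:tvar} extend verbatim.

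For the final claim about the restricted coupling class $\Pi^*$ and $\nu_\pi$, I would reprise the argument from the proof of Lemma \ref{lem:restrict} but with the dominating $\mu$ in place of Lebesgue measure. Fix $\pi\in\Pi^*$ and define the set function $\nu_\pi(B) := \mathbb{E}_\pi\bigl[\mathbf{1}_{\{\mathbf{w}=\tilde{\mathbf{w}}=\mathbf{u}\}}\mathbf{1}_{\{\mathbf{u}\in B\}}\bigr]$ on the common $\sigma$-algebra; the $\Pi^*$ measurability hypothesis makes this well-defined, and countable additivity is inherited from $\pi$. Since the marginals of $\pi$ are $P$ and $Q$, one has $\nu_\pi(B)\le P(B)$ and $\nu_\pi(B)\le Q(B)$ for every measurable $B$, so $\nu_\pi\ll\mu$ (and is in fact dominated by both $P$ and $Q$). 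The Radon--Nikodym theorem then yields a density $\tfrac{d\nu_\pi}{d\mu}$, and the $\mu$-a.s.\ bound $\tfrac{d\nu_\pi}{d\mu}\le \min\bigl(\tfrac{dP}{d\mu},\tfrac{dQ}{d\mu}\bigr)$ follows from the two set inequalities: if the density inequality failed on a set of positive $\mu$-measure, integrating over that set would contradict $\nu_\pi\le P$ or $\nu_\pi\le Q$.

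The one delicate step, and the main obstacle, is lifting the existence of a \emph{maximal} coupling to the general-measure setting, so that $D_{\mathrm{TV}}(P,Q) = \inf_{\pi\in\Pi^*}\mathbb{E}_\pi[\mathbf{1}_{\{\mathbf{w}\ne\tilde{\mathbf{w}}\}}]$ is attained and the chain (\ref{eq:dtvmin}) closes with equality. The density-case construction quoted from \cite{sason2013entropy} must be rewritten using $\tfrac{dP}{d\mu}$ and $\tfrac{dQ}{d\mu}$, and joint measurability on the diagonal $\{(\mathbf{u},\mathbf{u})\}$ must be verified so that the coupling lies in $\Pi^*$. I would handle this by taking $\mu=P+Q$, placing mass on the diagonal of magnitude $\min\bigl(\tfrac{dP}{d\mu},\tfrac{dQ}{d\mu}\bigr)$ and routing the residual mass through the product of the two normalized remainders $\bigl(\tfrac{dP}{d\mu}-\min\bigr)$ and $\bigl(\tfrac{dQ}{d\mu}-\min\bigr)$; measurability of the diagonal in a standard Borel space guarantees this coupling sits in $\Pi^*$, and a direct computation shows it attains $\int\min\bigl(\tfrac{dP}{d\mu},\tfrac{dQ}{d\mu}\bigr)\,d\mu$, matching the upper bound and closing the chain (\ref{chain-dc-1}).
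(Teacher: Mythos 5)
Your proof is correct and takes essentially the same route as the paper, whose own proof is just the remark that the density-case arguments of Lemmas \ref{lem:bayesopt}, \ref{lem:err} and \ref{lem:restrict} carry over once densities are replaced by Radon--Nikodym derivatives with respect to the common dominating measure $\mu$ and one uses $D_{\mathrm{TV}}(P,Q)=\tfrac{1}{2}\int\lvert \tfrac{dP}{d\mu}-\tfrac{dQ}{d\mu}\rvert\,d\mu$. Your added detail on the maximal coupling (diagonal mass $\min\bigl(\tfrac{dP}{d\mu},\tfrac{dQ}{d\mu}\bigr)$ plus a product of the normalized remainders, with diagonal measurability placing it in $\Pi^*$) is sound and merely makes explicit what the paper delegates to the construction cited from \cite{sason2013entropy}.
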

 
\begin{proof}
This follows the same line of treatment as in corresponding lemmas in Section \ref{sec:bayesoptimal} by noting that now Bayes Optimal classifier is as specified in terms of Radom Nikodym derivatives and that the total variation distance is given by:
   \begin{align}
      D_{\mathrm{TV}}(P,Q) & =  \sup_{A \in {\cal F}} \lvert P(A) - Q(A) \rvert = \frac{1}{2} \int \lvert \frac{dP}{d\mu}(\mathbf{w}) -\frac{dQ}{d\mu}(\mathbf{w}) \rvert d\mu(\mathbf{w}) 
   \end{align}

\end{proof}

   
Now defining $\epsilon(\mathbf{y}, \mathbf{z})$ in terms of conditional measures, we have analogous result for Theorem \ref{thm:maintv} in case of general measures for probability distribution  $\mathbb{P}$ and $\mathbb{Q}$, i.e.:
  \begin{align}
  & 2\lvert \mathbb{E}_D[e(f^{*}_1,D_s)]- \mathbb{E}_D[e(f^{*}_2,D^{-\mathbf{x}}_s)]  \rvert \nonumber\\
  &= \lvert D_{\mathrm{TV}}(\mathbb{P},\mathbb{Q}) - D_{\mathrm{TV}}(\mathbb{P}_{YZ},\mathbb{Q}_{YZ}) \rvert \nonumber\\
  \hfill & \geq \int \limits_{\mathbf{y},\mathbf{z}} \lvert \min (\frac{d\mathbb{P}_{YZ}}{d\mu}(\mathbf{y},\mathbf{z}) , \frac{d\mathbb{Q}_{YZ}}{d\mu}(\mathbf{y},\mathbf{z}) ) \rvert (1-\epsilon(\mathbf{y},\mathbf{z})) d\mu(\mathbf{y},\mathbf{z}) \nonumber
  \end{align}
where we have assumed that $\exists \mu$ such that $\mathbb{P}$ and $\mathbb{Q}$ are absolutely continuous with respect to $\mu$. We omit the proof as is essentially based on similar algebra as in the proof of Theorem \ref{thm:maintv}. If $\mathbb{Q}_{YZ}$ is absolutely continuous with respect to $\mathbb{P}_{YZ}$, then we have:
  \begin{align}
  2\lvert \mathbb{E}_D[e(f^{*}_1,D_s)]- \mathbb{E}_D[e(f^{*}_2,D^{-\mathbf{x}}_s)]  \rvert \geq \int \limits_{\mathbf{y},\mathbf{z}} \lvert \min (\frac{d\mathbb{Q}_{YZ}}{d\mathbb{P}_{YZ}}(\mathbf{y},\mathbf{z}),1 ) \rvert (1-\epsilon(\mathbf{y},\mathbf{z})) d\mathbb{P}_{YZ}(\mathbf{y},\mathbf{z}) \nonumber
  \end{align}
Corresponding to Theorem \ref{thm:mainthm1} (by mimicking the Theorem \ref{theorem3} and Theorem \ref{theorem4}), we have the Theorem \ref{thm:mainthm2} as proposed. \qed

\end{appendices}

\end{document}